\def\ie{\emph{i.e.,\ }}
\def\nm{\normalsize}
\providecommand{\norm}[1]{\left\|#1\right\|}
\newcommand{\IF}}
    \newcommand\SCOPE{\begin{ALC@g}}%
    \newcommand\ENDSCOPE{\end{ALC@g}}%
    \newcommand{\IF}%
\newtheorem{theorem}{Theorem}
\newtheorem{lemma}{Lemma}
\title{\fontsize{23}{24}\selectfont Communication Efficient Distributed Learning with Censored, Quantized, and Generalized Group ADMM}
\author{Chaouki Ben Issaid, Anis Elgabli, $^\dagger$Jihong Park, Mehdi Bennis and $^\ddagger$M\'erouane Debbah \thanks{C. Ben Issaid, A. Elgabli and M. Bennis are with the Centre of Wireless Communications, University of Oulu, 90014 Oulu, Finland (email: \{chaouki.benissaid, anis.elgabli, mehdi.bennis\}@oulu.fi).}\\ 
\thanks{$^\dagger$J. Park is with the School of Information Technology, Deakin University, Geelong, VIC 3220, Australia (email: jihong.park@deakin.edu.au).}
\thanks{$^\ddagger$M. Debbah is with Universit\'e Paris-Saclay, CNRS, CentraleSup\'elec, 91190, Gif-sur-Yvette, France (e-mail: merouane.debbah@centralesupelec.fr) and the Lagrange Mathematical and Computing Research Center, 75007, France.
}
}
\begin{document}

\maketitle
\begin{abstract}
In this paper, we propose a communication-efficiently decentralized machine learning framework that solves a consensus optimization problem defined over a network of inter-connected workers. The proposed algorithm, \emph{Censored and Quantized Generalized GADMM} (CQ-GGADMM), leverages the worker grouping and decentralized learning ideas of \emph{Group Alternating Direction Method of Multipliers} (GADMM), and pushes the frontier in communication efficiency by extending its applicability to generalized network topologies, while incorporating link censoring for negligible updates after quantization. We theoretically prove that CQ-GGADMM achieves the linear convergence rate when the local objective functions are strongly convex under some mild assumptions. Numerical simulations corroborate that CQ-GGADMM exhibits higher communication efficiency in terms of the number of communication rounds and transmit energy consumption without compromising the accuracy and convergence speed, compared to the censored decentralized ADMM, and the worker grouping method of GADMM.
\end{abstract}

\begin{keywords}
Alternating Direction Method of Multipliers, communication efficiency, decentralized machine learning, stochastic quantization.
\end{keywords}

\section{Introduction}\label{introduction}


    Machine learning is central to emerging mission-critical applications such as autonomous driving, remote surgery, and the fifth-generation (5G) communication systems and beyond \cite{park2018wireless,UO:6G,zhu2020toward}. These applications commonly require extremely low latency and high reliability while accurately reacting to local environmental dynamics \cite{park2020extreme}. To this end, training their machine learning models needs the sheer amount of fresh training data samples that are generated by and dispersed across edge devices (e.g., phones, cars, access points, etc.), hereafter referred to as workers. Collecting these raw data may not only violate the data privacy, but also incur significant communication overhead under limited bandwidth. This calls for developing communication-efficient and privacy-preserving distributed learning frameworks \cite{park2020:cml,Chen2017MachineLF}. Federated learning is one representative method that ensures learning through periodically exchanging model parameters across workers rather than sending private data samples \cite{kairouz2019advances}. Nevertheless, federated learning postulates a parameter server collecting and distributing model parameters, which is not always accessible from faraway workers and is vulnerable to a single point of failure \cite{KimCL:19}. 

\begin{figure*}
    \centering
    \includegraphics[width=\textwidth]{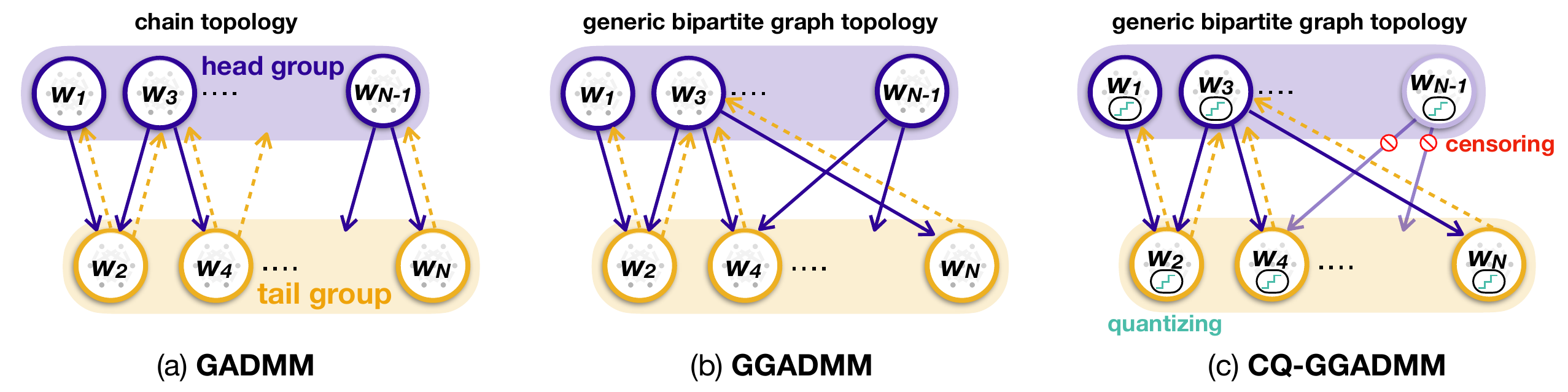}
    \caption{A schematic illustration of (a) \emph{group ADMM (GADMM)}, (b) \emph{generalized GADMM (GGADMM)} and (c) \emph{censored and quantized GGADMM (CQ-GGADMM)}.}
    \label{Fig_Overview} \vspace{-0.75cm}
    \end{figure*}

Spurred by this motivation, by generalizing and extending the Group Alternating Direction Method of Multipliers (GADMM, see Fig.~\ref{Fig_Overview}(a)) and the Quantized GADMM (Q-GADMM) in our prior work \cite{anis2020, anis2019}, in this article we propose a novel decentralized learning framework, coined \emph{Censored and Quantized Generalized Group ADMM} (CQ-GGADMM, see Fig.~\ref{Fig_Overview}(c)), which exchanges model parameters in a communication-efficient way without any central entity. Following the same idea of GADMM, workers in CQ-GGADMM are divided into head and tail groups in which the workers in the same group update their models in parallel, whereas the workers in different groups update their models in an alternating way. In essence, CQ-GGADMM exploits two key principles to improve the communication efficiency. First, to reduce the number of communication links per round, CQ-GGADMM exploits a censoring approach that allows to exchange model parameters only when the updated model is sufficiently changed from the previous model, i.e., skipping small model updates \cite{liu2019}. Second, to reduce the communication payload size per each link, CQ-GGADMM applies a heterogeneous stochastic quantization scheme that decreases the number of bits to represent each model parameter \cite{anis2019}. These principles are integrated giving rise to a generalized version of GADMM (GGADMM, see Fig.~\ref{Fig_Overview}(b)) wherein each worker communicates only with its neighboring workers. Note that in the original GADMM, every worker needs to connect with two neighbors under a chain network topology \cite{anis2020}. By contrast, in CQ-GGADMM, each worker can connect with an arbitrary number of neighbors, as long as the network topology graph is bipartite and connected.






Towards improving the communication efficiency of distributed learning, prior works have studied various techniques under centralized and decentralized network architectures.
\paragraph{Fast Convergence}
The total communication cost until completing a distributed learning operation can be reduced by accelerating the convergence speed. To this end, departing from the conventional first-order methods such as distributed gradient descent~\cite{boyd2011distributed}, second-order methods are applied under centralized \cite{Konecny2016, liu2019, elgabli2020harnessing} and decentralized architectures \cite{anis2020}. Furthermore, momentum acceleration is utilized under centralized \cite{yu2019linear,gitman2019understanding} and decentralized settings \cite{gao2020adaptive}.

\paragraph{Link Sparsification}
In large-scale distributed learning, a large portion of total communication links is often redundant \cite{mishchenko20a}. In this respect, for each communication round, sparsifying the number of communication links can reduce the communication cost without compromising the accuracy. To this end, link censoring for negligible model updates is applied under centralized \cite{chen2018} and decentralized network topologies \cite{singh2019sparqsgd,anis2020}.

\paragraph{Payload Size Reduction}
To reduce the communication payload size per link, model updates are quantized under centralized \cite{bernstein18a, alistarh2017,amiri2020federated} and decentralized network topologies \cite{nandan2019,gao2020adaptive,anis2019}. Alternatively, the entries of model updates can be partially dropped as shown under centralized \cite{wangni2018} and decentralized architectures \cite{Stich:18}. Furthermore, under centralized settings, model parameters can be compressed at the parameter server via knowledge distillation (KD) or while training and running KD simultaneously, i.e., federated distillation \cite{Jeong18,Ahn}.

Although the aforementioned principles have been separately studied in \cite{anis2020, liu2019 ,anis2019}, integrating them for maximizing the communication efficiency while guaranteeing fast convergence remains a non-trivial problem. Indeed, first the algorithm convergence rate depends highly on the network topology. Second, both censoring and quantization steps incur model update errors that may propagate over communication rounds due to the lack of central entity. To resolve this problem, we carefully determine the non-increasing target censoring threshold and quantization step size, such that the model updates are more finely tuned as time elapses until convergence. We thereby prove the linear convergence rate of CQ-GGADMM, and show its effectiveness by simulations, in terms of convergence speed, total communication cost, and transmission energy consumption. The major contributions of this work are summarized as follows.
\begin{itemize}
    \item We propose CQ-GGADMM, a second-order decentralized learning framework utilizing censoring, quantization, and GADMM for any bipartite and connected network topology graph (\textbf{Algorithm~\ref{algo2}} in Sec.~\ref{CQGADMM}).
    \item We prove that CQ-GGADMM converges to the optimal solution for convex loss functions (\textbf{Theorem}~\ref{thm1} in Sec.~\ref{convergence}). 
    \item We identify the network topology conditions under which CQ-GGADMM achieves a linear convergence rate (\textbf{Theorem}~\ref{thm2} in Sec.~\ref{convergence}) when the loss functions are strongly convex. 
    \item Numerical simulations have corroborated that in linear and logistic regression tasks using synthetic and real datasets, CQ-GGADMM achieves the same convergence speed at significantly lower communication rounds and several orders of magnitude less transmission energy, compared to C-GGADMM and Censored ADMM (C-ADMM) in \cite{liu2019}.
    

\end{itemize}


The remainder of this paper is organized as follows. In section~II, we describe the generalized version of GADMM (GGADMM) for a bipartite and connected graph, and formulate the decentralized learning problem. Then, we extend GGADMM to censored and quantized GGADMM (CQ-GGADMM) in Section~III. In Section~IV, we prove the convergence of CQ-GGADMM theoretically under some mild conditions. Finally, Section~V validates the performance of CQ-GGADMM by simulations. The details of the proofs of our results are deferred to the appendices.
\section{Problem Formulation}
\label{probForm} 
We consider a connected network wherein a set $\mathcal{V}$ of $N$ workers aim to reach a consensus around a solution of a global optimization problem. The problem is solved using only local data and information available for each worker. Moreover,  communication is constrained to only take place between neighboring workers. The optimization problem is given by
\begin{align}
\textbf{(P1)} ~~ &\bm{\Theta}^* := \arg\min_{\bm{\Theta}} \sum_{n=1}^N f_n(\bm{\Theta}),
\end{align}
where $\bm{\Theta} \in  \mathbb{R}^{d \times 1}$ is the global model parameter and $f_n: \mathbb{R}^d \rightarrow \mathbb{R}$ is a local function composed of data stored at worker $n$. Problem $\textbf{(P1)}$ appears in many applications of machine learning, especially when the dataset is very large and the training is carried out using different workers. The connections among workers are represented as an undirected communication graph $\mathcal{G}$ having the set $\mathcal{E} \subseteq \mathcal{V} \times \mathcal{V}$ of edges. The set of neighbors of worker $n$ is defined as $\mathcal{N}_n = \{ m | (n, m) \in \mathcal{E} \}$ whose cardinality is $|\mathcal{N}_n| = d_{n}$. We start by making the following key assumption.

\textbf{Assumption 1.} The communication graph $\mathcal{G}$ is bipartite and connected.\\
Under \textbf{Assumption 1}, following the worker grouping of GADMM \cite{anis2020}, workers are divided into two groups: a \emph{head group} $\mathcal{H}$, and a \emph{tail group} $\mathcal{T}$. Each head worker in $\mathcal{H}$ can only communicate with tail workers in $\mathcal{T}$, and vice versa. In this case, the edge set definition can be re-written as $\mathcal{E} = \{ (n, m) | n \in \mathcal{H}, m \in \mathcal{T}\}$, and the problem \textbf{(P1)} is equivalent to the following problem
\begin{align}
\textbf{(P2)} ~~ &\bm{\theta}^* := \arg\min_{\{\bm{\theta}_n\}_{n=1}^N} \sum_{n=1}^N f_n(\bm{\theta}_n) \\ \nonumber &\text{s.t. }  \bm{\theta}_{n} = \bm{\theta}_{m}, \forall (n, m) \in \mathcal{E},
\end{align}
where $\bm{\theta}_n$ is the local copy of the common optimization variable $\bm{\Theta}$ at worker $n$. Note that, under the formulation \textbf{(P2)}, the objective function becomes separable across the workers and as a consequence the problem can be solved in a distributed manner. In this case, the Lagrangian of the optimization problem \textbf{(P2)} can be written as
\begin{align}
\nonumber \bm{\mathcal{L}}_\rho(\bm{\theta},\bm{\lambda}) &= \sum_{n=1}^N f_n(\bm{\theta}_n) + \sum_{(n,m) \in \mathcal{E}} \langle \bm{\lambda}_{n,m}, \bm{\theta}_{n} - \bm{\theta}_{m} \rangle\\
& + \frac{\rho}{2} \sum_{(n,m) \in \mathcal{E}} \|\bm{\theta}_{n} - \bm{\theta}_{m}\|^2,
\end{align}
where $\rho > 0$ is a constant penalty parameter and $\bm{\lambda}_{n,m}$ is the dual variable between neighboring workers $n$ and $m$, $\forall (n,m) \in \mathcal{E}$. At iteration $k+1$, the Generalized Group ADMM (GGADMM) algorithm runs as follows.
\begin{itemize}
\item[$(1)$] Every head worker, $n \in \mathcal{H}$, updates its primal variable by solving
\begin{align}
\nonumber \bm{\theta}_{n}^{k+1} &= \underset{\bm{\theta}_{n}}{\arg \min}~ f_n(\bm{\theta}_{n}) + \sum_{m \in \mathcal{N}_n} \langle \bm{\lambda}_{n,m}^{k} , \bm{\theta}_{n}-\bm{\theta}_{m}^{k}\rangle \\
&+ \frac{\rho}{2} \sum_{m \in \mathcal{N}_n} \|\bm{\theta}_{n}-\bm{\theta}_{m}^{k}\|^2,
\end{align}
and sends its updated model to its neighbors.
\item[$(2)$] The primal variables of tail workers, $m \in \mathcal{T}$, are then updated as
\begin{align}
\nonumber \bm{\theta}_{m}^{k+1} &= \underset{\bm{\theta}_{m}}{\arg \min}~ f_m(\bm{\theta}_{m}) + \sum_{n \in \mathcal{N}_m} \langle \bm{\lambda}_{n,m}^{k}, \bm{\theta}_{n}^{k+1}-\bm{\theta}_{m} \rangle\\
&+ \frac{\rho}{2} \sum_{n \in \mathcal{N}_m} \|\bm{\theta}_{n}^{k+1}-\bm{\theta}_{m}\|^2.
\end{align}
\item[$(3)$] The dual variables are updated locally for every worker, after receiving the model updates from its neighbors, in the following way
\begin{align}
\bm{\lambda}_{n,m}^{k+1} = \bm{\lambda}_{n,m}^{k} + \rho (\bm{\theta}_{n}^{k+1}-\bm{\theta}_{m}^{k+1}),~\forall (n, m) \in \mathcal{E}.
\end{align}
\end{itemize}
Note that GGADMM is a generalized version of GADMM algorithm proposed in \cite{anis2020} since it considers an arbitrary topology. Introducing $\bm{\alpha}_n = \sum_{m \in \mathcal{N}_n} \bm{\lambda}_{n,m}, ~\forall n \in \mathcal{V}$, we can write 
\begin{itemize}
\item[$(1)$] The update of the models of head workers is done in parallel by solving 
\begin{align}
\nonumber \bm{\theta}_{n}^{k+1} &= \underset{\bm{\theta}_{n}}{\arg \min}~ f_n(\bm{\theta}_{n}) + \langle \bm{\theta}_{n} , \bm{\alpha}_{n}^k- \rho \sum_{m \in \mathcal{N}_n} \bm{\theta}_{m}^{k}\rangle\\
& + \frac{\rho}{2} d_{n} \|\bm{\theta}_{n}\|^2.
\end{align}
\item[$(2)$] The models of tail workers are updated in parallel using
\begin{align}
\nonumber \bm{\theta}_{m}^{k+1} &= \underset{\bm{\theta}_{m}}{\arg \min}~ f_m(\bm{\theta}_{m}) + \langle \bm{\theta}_{m} , \bm{\alpha}_{m}^k- \rho \sum_{n \in \mathcal{N}_m} \bm{\theta}_{n}^{k+1}\rangle\\
& + \frac{\rho}{2} d_{m} \|\bm{\theta}_{m}\|^2.
\end{align}
\item[$(3)$] Instead of updating $\bm{\lambda}_{n,m}$, each worker will update locally the new auxiliary variable $\bm{\alpha}_n$ as
\begin{align}
\bm{\alpha}_n^{k+1} = \bm{\alpha}_n^{k} + \rho \sum_{m \in \mathcal{N}_n} (\bm{\theta}_{n}^{k+1} - \bm{\theta}_{m}^{k+1}), ~\forall n \in \mathcal{V}.
\end{align}
\end{itemize}

\section{Censored Quantized Generalized Group ADMM}\label{CQGADMM} 

To reduce the communication payload size, we use stochastic quantization in which we use the quantized version of the information $\boldsymbol{\hat{Q}}_{m}, ~ \forall m \in {\cal N}_n$ to update the primal and dual variables at each worker $n$. 
We also reduces the communication overhead by using the ``censoring idea''. 

We follow a similar stochastic quantization scheme to the one described in \cite{anis2019} where each worker quantizes the difference between its current model and its previously quantized model before transmission ({$\boldsymbol{\theta}_n^k-\boldsymbol{\hat{Q}}_n^{k-1}$\nm}) as {$\boldsymbol{\theta}_n^k-\boldsymbol{\hat{Q}}_n^{k-1} = Q_n(\boldsymbol{\theta}_n^k, \boldsymbol{\hat{Q}}_n^{k-1})$\nm. The function {$Q_n(\cdot)$\nm} is a stochastic quantization operator that depends on the quantization probability $p_{n,i}^k$ for each model vector's dimension $i\in\{1,2,\cdots,d\}$, and on $b_n^k$ bits used for representing each model vector dimension. 

The $i^{\text{th}}$ dimensional element {$[\boldsymbol{\hat{Q}}_n^{k-1}]_i$\nm} of the previously quantized model vector is centred at the quantization range {$2 R_n^k$\nm} that is equally divided into $2^{b_n^k}-1$\nm~quantization levels, yielding the quantization step size $\Delta_n^k=2 R_n^k/(2^{b_n^k}-1)$\nm. In this coordinate, the difference between the $i^{\text{th}}$ dimensional element {$[\boldsymbol{\theta}_n^k]_i$\nm} of the current model vector and {$[\boldsymbol{\hat{Q}}_n^{k-1}]_i$\nm} is
$[c_n(\boldsymbol\theta_n^k)]_i\!=\! \frac{1}{\Delta_n^k} \left([\boldsymbol\theta_n^k]_i-[\boldsymbol{\hat{Q}}_n^{k-1}]_i\!+\!R_n^k\right)\!$ where $R_n^k$\nm~ensures the non-negativity of the quantized value. Then, {$[c_n(\boldsymbol\theta_n^k)]_i$\nm} is mapped to
\begin{align}
[q_n(\boldsymbol\theta_n^k)]_i =\begin{cases}
\left\lceil [c_n(\boldsymbol\theta_n^k)]_i\!\right\rceil & \text{with probability $p_{n,i}^k$}\\[2pt]
\left\lfloor [c_n(\boldsymbol\theta_n^k)]_i\!\right\rfloor & \text{with probability $1-p_{n,i}^k$}, \label{Eq:quant}
\end{cases}
\end{align}\nm
where $\lceil\cdot \rceil$ and $\lfloor\cdot \rfloor$ are the ceiling and floor functions, respectively. Next, the probability $p_{n,i}^k$ in \eqref{Eq:quant} is selected such that the expected quantization error $\mathbb{E}\left[\boldsymbol{e}_{n,i}^k\right]$ is zero  
\vspace{-5pt}\begin{align}
p_{n,i}^k = \left( [c_n(\boldsymbol\theta_n^k)]_i-\lfloor[c_n(\boldsymbol\theta_n^k)]_i \rfloor \right). \label{Eq:Optp}
\end{align}
\noindent The choice of $p_{n,i}^k$ in \eqref{Eq:Optp} ensures that the quantization in \eqref{Eq:quant} is unbiased and the  quantization error variance $\mathbb{E}\left[\left(\bm{e}_{n,i}^k\right)^2\right]$ is less than $({\Delta_n^k})^2$. This implies that $\mathbb{E}\left[\norm{\bm{e}_{n}^k}^2\right] \leq d({\Delta_n^k})^2$. 

In addition to the above condition, the convergence of CQ-GGADMM requires non-increasing quantization step sizes over iterations, \ie $\Delta_n^k \leq \omega \Delta_n^{k-1}$ for all $k$ where $\omega \in (0,1)$. To satisfy this condition, the parameter $b_n^k$ is chosen as
\begin{align}
b_n^k \geq \left\lceil \log_2\left(1 + (2^{b_n^{k-1}}-1)R_n^k/(\omega R_n^{k-1}) \right) \right\rceil. \label{Eq:Optb}
\end{align}\nm
Under this condition, we get that $\Delta_n^k \leq \omega^k \Delta_n^0$. Given $p_{n,i}^k$ in \eqref{Eq:Optp} and $b_n^k$ in \eqref{Eq:Optb}, the convergence of CQ-GGADMM is provided in Section \ref{convergence}. With the aforementioned stochastic quantization procedure, $b_n^k$\nm, $R_n^k$\nm, and $q_n(\boldsymbol{\theta}_n^k)$ suffice to represent $\boldsymbol{\hat{Q}}_n^k$, where $q_n(\boldsymbol{\theta}_n^k)=( [q_n(\boldsymbol{\theta}_n^k)]_1,\dots,[q_n(\boldsymbol{\theta}_n^k)]_d )^\intercal$ which are transmitted to neighbors. After receiving these values, $\boldsymbol{\hat{Q}}_n^k$\nm~can be reconstructed as
$\boldsymbol{\hat{Q}}_n^k = \boldsymbol{\hat{Q}}_n^{k-1}+ \Delta_n^k q_n(\boldsymbol\theta_n^k)-R_n^k\mathbf{1}$. When the full arithmetic precision uses $32$ bits, every transmission payload size of CQ-GGADMM is $b_n^k d + (b_R + b_b)$ bits, where $b_R\leq 32$ and $b_b\leq 32$ are the required bits to represent $R_n^k$ and $b_n^k$, respectively. Compared to GGADMM, whose payload size is $32d$ bits, CQ-GGADMM can achieve a huge reduction in communication overhead, particularly for large models, \ie large $d$. 

Now, we introduce a censoring condition to reduce the number of workers communicating at a given iteration by allowing the worker to transmit only when the difference between the current and previously transmitted value is sufficiently different. However, we apply the censoring not on the model itself but on its quantized value, \ie if the worker is not censored, it transmits its quantized model to its neighbors. According to the communication-censoring strategy, we have that $\bm{\hat{\theta}}_n^{k+1}
= \bm{\hat{Q}}_n^{k+1}$ provided that $\|\bm{\hat{\theta}}_n^{k}- \bm{\hat{Q}}_n^{k+1}\|
\geq \tau_0 \xi^{k+1}$ and $\bm{\hat{\theta}}_n^{k+1}
= \bm{\hat{\theta}}_n^{k}$, otherwise. The CQ-GGADMM algorithm can be written in this case as
\begin{itemize}
\item[$(1)$] Primal variables for head workers are found using
\begin{align} \label{headupdate}
\nonumber \bm{\theta}_{n}^{k+1} &= \underset{\bm{\theta}_{n}}{\arg \min}~ f_n(\bm{\theta}_{n}) + \langle \bm{\theta}_{n} , \bm{\alpha}_{n}^k- \rho \sum_{m \in \mathcal{N}_n} \bm{\hat{\theta}}_{m}^{k}\rangle\\
& + \frac{\rho}{2} d_{n} \|\bm{\theta}_{n}\|^2.
\end{align}
\item[$(2)$] Primal variables update for tail workers is done as follow
\begin{align} \label{tailupdate}
\nonumber \bm{\theta}_{m}^{k+1} &= \underset{\bm{\theta}_{m}}{\arg \min}~ f_m(\bm{\theta}_{m}) + \langle \bm{\theta}_{m} , \bm{\alpha}_{m}^k- \rho \sum_{n \in \mathcal{N}_m} \bm{\hat{\theta}}_{n}^{k+1}\rangle \\
&+ \frac{\rho}{2} d_{m} \|\bm{\theta}_{m}\|^2.
\end{align}
\item[$(3)$] Dual variable of each worker is updated locally
\begin{align} \label{alphaupdate}
\bm{\alpha}_n^{k+1} = \bm{\alpha}_n^{k} + \rho \sum_{m \in \mathcal{N}_n} (\bm{\hat{\theta}}_{n}^{k+1} - \bm{\hat{\theta}}_{m}^{k+1}), ~\forall n \in \mathcal{V}.
\end{align}
\end{itemize}
\begin{algorithm}[t]
{ 				
\begin{algorithmic}[1]
\STATE {\bf Input}: $N, \rho, \tau_0, \xi, f_n(\boldsymbol{\theta}_n) ~ \text{for all} \ n$ 
\STATE $\boldsymbol{\theta}_n^{0}=0, \bm{\hat{\theta}}_{n}^{0}=0, \bm{\alpha}_{n}^{0}=0$ for all $n$
\FOR {$k=0,1,2,\cdots,K$}
\STATE \textbf{Head worker $n \in \mathcal{H}$:} 
\SCOPE
\STATE \textbf{computes} its primal variable $\boldsymbol{\theta}_n^{k+1}$ via \eqref{headupdate} in parallel
\STATE \textbf{quantizes} its primal variable $\boldsymbol{\theta}_n^{k+1}$ to  $\bm{\hat{Q}}_n^{k+1}$ as described in section \ref{CQGADMM}
\IF{$\|\bm{\hat{\theta}}_{n}^{k} - \bm{\hat{Q}}_n^{k+1}\| \geq \tau_0 \xi^{k+1}$}
\STATE worker $n$ \textbf{sends} $q_n(\boldsymbol{\theta}_n^{k+1})$, $R_n^{k+1}$, and $b_n^{k+1}$ to its neighboring workers  $\mathcal{N}_n$ and \textbf{sets} $\bm{\hat{\theta}}_{n}^{k+1} = \bm{\hat{Q}}_n^{k+1}$. 
\ELSE
\STATE worker $n$ \textbf{does not transmit} and \textbf{sets} $\bm{\hat{\theta}}_{n}^{k+1} = \boldsymbol{\hat{\theta}}_n^{k}$. 
\ENDIF
\ENDSCOPE
\STATE \textbf{Tail worker $m \in \mathcal{T}$:} 
\SCOPE
\STATE \textbf{computes} its primal variable  $\boldsymbol{\theta}_m^{k+1}$ via \eqref{tailupdate} in parallel
\STATE \textbf{quantizes} its primal variable $\boldsymbol{\theta}_n^{k+1}$ to  $\bm{\hat{Q}}_n^{k+1}$ as described in section \ref{CQGADMM}
\IF{$\|\bm{\hat{\theta}}_{m}^{k} -  \bm{\hat{Q}}_m^{k+1}\| \geq \tau_0 \xi^{k+1}$}
\STATE worker $m$ \textbf{sends} $q_n(\boldsymbol{\theta}_m^{k+1})$, $R_m^{k+1}$, and $b_m^{k+1}$ to its neighboring workers  $\mathcal{N}_m$ and \textbf{sets} $\bm{\hat{\theta}}_{m}^{k+1} = \bm{\hat{Q}}_m^{k+1}$.  
\ELSE
\STATE worker $m$ \textbf{does not transmit} and \textbf{sets} $\bm{\hat{\theta}}_{m}^{k+1} = \boldsymbol{\hat{\theta}}_m^{k}$. 
\ENDIF
\ENDSCOPE
\STATE \textbf{Every worker updates}  the dual variables $\bm{\alpha}_{n}^{k+1}$ via \eqref{alphaupdate} locally.
\ENDFOR			
\end{algorithmic}
\caption{Censored Quantized Generalized Group ADMM (CQ-GGADMM) \label{algo2}}
}						
\end{algorithm}  
\section{Convergence Analysis}\label{convergence}
Before stating the main results of the paper, we further make the following assumptions.\\
\textbf{Assumption 2.} There exists an optimal solution set to $\textbf{(P1)}$ which has at least one finite element. \\
\textbf{Assumption 3.} The local cost functions $f_n$ are convex.\\
\textbf{Assumption 4.} The local cost functions $f_n$ are strongly convex with parameter $\mu_n > 0$, \ie
\begin{align}
\|\nabla f_n(\bm{x}) - \nabla f_n(\bm{y}) \| \geq \mu_n \|\bm{x}-\bm{y}\|, \forall \bm{x}, \bm{y} \in \mathbb{R}^d.
\end{align} 
\textbf{Assumption 5.} The local cost functions $f_n$ have $L_n$-Lipschitz continuous gradient ($L_n > 0$)
\begin{align}
\|\nabla f_n(\bm{x}) - \nabla f_n(\bm{y}) \| \leq L_n \|\bm{x}-\bm{y}\|, \forall \bm{x}, \bm{y} \in \mathbb{R}^d.
\end{align} 
Assumptions 1-5 are key assumptions that are often used in the context of distributed optimization \cite{liu2019, Konecny2016, chen2018}. While only assumptions 1-3 are needed to prove the convergence of CQ-GGADMM, assumptions 4 and 5 are further required to show the linear convergence rate. Note that Assumption 2 ensures that the problem \textbf{(P2)} has at least one optimal solution, denoted by $\bm{\theta}^\star$. Under Assumption 4, the function $f$ is strongly convex with parameter $\mu = \underset{1 \leq n \leq N}{\min}~\mu_n$, and from Assumption 5, we can see that $f$ has $L$-Lipschitz continuous gradient with $L = \underset{1 \leq n \leq N}{\max}~L_n$.\\
To proceed with the analysis, we start by writing the optimality conditions as
\begin{align}\label{optimality}
\bm{\theta}_{n}^\star = \bm{\theta}_{m}^\star, ~\forall (n, m) \in \mathcal{E} ~~ \text{and}~~ \nabla f_n(\bm{\theta}_n^\star) + \bm{\alpha}_{n}^\star = \bm{0}, ~ \forall n \in \mathcal{V},
\end{align}
where $\bm{\theta}_{n}^\star$ and $\bm{\alpha}_{n}^\star$ are the optimal values of the primal and dual variables, respectively. We define the primal residual $\bm{r}_{n,m}^{k+1} = \bm{\theta}_{n}^{k+1}-\bm{\theta}_{m}^{k+1}, ~\forall (n,m) \in \mathcal{E}$, and the dual residual $\bm{s}_{n}^{k+1} = \rho \sum_{m \in \mathcal{N}_n} (\bm{\hat{\theta}}_{m}^{k+1}-\bm{\hat{\theta}}_{m}^{k}), ~\forall n \in \mathcal{H}$. The total error is defined as $\bm{\epsilon}_{n}^{k+1} = \bm{\theta}_{n}^{k+1}-\bm{\hat{\theta}}_{n}^{k+1}, ~\forall n=1,\dots,N$. The total error can be decomposed as the sum of two errors: $(i)$ a random error coming from the quantization process $\bm{e}_{n}^{k+1} = \bm{\theta}_{n}^{k+1}-\bm{\hat{Q}}_{n}^{k+1}$, and $(ii)$ a deterministic one due to the censoring strategy $\bm{\ell}_{n}^{k+1} = \bm{\hat{Q}}_{n}^{k+1}-\bm{\hat{\theta}}_{n}^{k+1}$. According to the communication-censoring strategy, we have that $\bm{\hat{\theta}}_n^{k} = \bm{\hat{Q}}_n^{k}$ if $\|\bm{\hat{\theta}}_n^{k-1}- \bm{\hat{Q}}_n^{k}\| \geq \tau^{k}$ and $\bm{\hat{\theta}}_n^{k}
= \bm{\hat{\theta}}_n^{k-1}$ if $\|\bm{\hat{\theta}}_n^{k-1}- \bm{\hat{Q}}_n^{k}\| < \tau^{k}$. In both cases, we have
$ \|\bm{\ell}_n^{k} \| = \|\bm{\hat{Q}}_n^{k} - \bm{\hat{\theta}}_n^{k}\| < \tau^k$. Since the sequence $\{\tau^k\}$ is a decreasing non-negative sequence, then we have that $\|\bm{\ell}_n^k \| \leq \tau^k$ and $\|\bm{\ell}_n^{k+1} \| \leq \tau^k$, $\forall n \in \mathcal{V}$. Since the second moment of the quantization error is bounded by
$\mathbb{E}\left[\|\bm{e}_n^k \|^2 \right] \leq d (\Delta_n^k)^2 \leq d (\Delta^0)^2 \omega^{2k}$ where $\Delta^0 = \underset{1 \leq n \leq N}{\max} \Delta_n^0$, then, the total error can be upper bounded, using \eqref{id1}, by
\begin{align}\label{err_bound}
\nonumber \mathbb{E}\left[\|\bm{\epsilon}_n^k \|^2 \right] &\leq 2 (\|\bm{\ell}_n^k \|^2 +  \mathbb{E}\left[\|\bm{e}_n^k \|^2 \right] )\\
&\leq 2 \left(\tau_0^2 \xi^{2k}  + d (\Delta^0)^2 \omega^{2k} \right) \leq 4 C_0^2 \psi^{2k},
\end{align}
where $C_0 = \max\{\tau_0, \sqrt{d} (\Delta^0)\}$, and $\psi = \max\{\xi, \omega \} \in (0, 1)$. 

To prove the convergence of the proposed algorithm, we start by stating and proving the first lemma where we derive upper and lower bounds on the expected value of the  optimality gap.   
\begin{lemma}\label{lemma1}
Under assumptions 1-3, we have the following bounds on the expected value of the optimality gap
\begin{align}
\nonumber & (i)~ \sum_{n=1}^N \mathbb{E}\left[f_n(\bm{\theta}^{k+1})-f_n(\bm{\theta}^\star)\right] \\
\nonumber & \leq\!-\!\sum_{(n,m) \in \mathcal{E}} \mathbb{E}\left[\langle \bm{\lambda}_{n,m}^{k+1}, \bm{r}_{n,m}^{k+1}\rangle \right]\!+\!\sum_{n\in \mathcal{H}} \mathbb{E}\left[\langle \bm{s}_{n}^{k+1}, \bm{\theta}_{n}^\star\!-\!\bm{\theta}_{n}^{k+1}\rangle \right]\\
&+\!\rho \sum_{n=1}^N d_{n} \mathbb{E}\left[\langle \bm{\epsilon}_n^{k+1} , \bm{\theta}_n^\star\!-\!\bm{\theta}_n^{k+1} \rangle \right],\label{ubound} \\
&(ii) ~\sum_{n=1}^N \mathbb{E}\left[f_n(\bm{\theta}^{k+1})\!-\!f_n(\bm{\theta}^\star)\right]\!\geq\!-\sum_{(n,m) \in \mathcal{E}}\!\mathbb{E}\left[\langle \bm{\lambda}_{n,m}^\star, \bm{r}_{n,m}^{k+1}\rangle \right]. \label{lbound}
\end{align}
\end{lemma}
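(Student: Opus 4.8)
The plan is to exploit the first-order stationarity conditions of the per-worker primal subproblems \eqref{headupdate} and \eqref{tailupdate} together with the convexity of $f_n$ (Assumption~3), and then to re-express the gradient terms through the dual variables, the primal residual $\bm{r}_{n,m}^{k+1}$, the dual residual $\bm{s}_n^{k+1}$, and the total error $\bm{\epsilon}_n^{k+1}$ by invoking the dual recursion \eqref{alphaupdate} and the optimality conditions \eqref{optimality}.

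For the upper bound $(i)$, I would first write the stationarity condition of \eqref{headupdate}, namely $\nabla f_n(\bm{\theta}_n^{k+1}) = -\bm{\alpha}_n^k + \rho\sum_{m\in\mathcal{N}_n}\bm{\hat{\theta}}_m^k - \rho d_n \bm{\theta}_n^{k+1}$ for $n\in\mathcal{H}$, and the analogous condition for \eqref{tailupdate} with $\bm{\hat{\theta}}_n^{k+1}$ replacing $\bm{\hat{\theta}}_m^k$ for $m\in\mathcal{T}$. Eliminating $\bm{\alpha}_n^k$ via $\bm{\alpha}_n^k = \bm{\alpha}_n^{k+1}-\rho\sum_{m\in\mathcal{N}_n}(\bm{\hat{\theta}}_n^{k+1}-\bm{\hat{\theta}}_m^{k+1})$ from \eqref{alphaupdate} and using $\bm{\hat{\theta}}_n^{k+1}-\bm{\theta}_n^{k+1}=-\bm{\epsilon}_n^{k+1}$, after regrouping the $d_n$-fold sums these collapse to $\nabla f_n(\bm{\theta}_n^{k+1}) = -\bm{\alpha}_n^{k+1}-\rho d_n\bm{\epsilon}_n^{k+1}-\bm{s}_n^{k+1}$ for head workers and $\nabla f_m(\bm{\theta}_m^{k+1}) = -\bm{\alpha}_m^{k+1}-\rho d_m\bm{\epsilon}_m^{k+1}$ for tail workers; the dual residual appears only on the head side because the tail subproblem already sees the fresh $\bm{\hat{\theta}}_n^{k+1}$. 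Convexity then gives $f_n(\bm{\theta}_n^{k+1})-f_n(\bm{\theta}_n^\star)\le\langle\nabla f_n(\bm{\theta}_n^{k+1}),\bm{\theta}_n^{k+1}-\bm{\theta}_n^\star\rangle$; substituting the gradient expressions and summing over $n=1,\dots,N$ yields the right-hand side of \eqref{ubound} except with $\sum_{n}\langle\bm{\alpha}_n^{k+1},\bm{\theta}_n^\star-\bm{\theta}_n^{k+1}\rangle$ in place of the $\bm{\lambda}$-term. To finish I would note that, by induction from the zero initialization, $\bm{\alpha}_n^{k+1}=\sum_{m\in\mathcal{N}_n}\bm{\lambda}_{n,m}^{k+1}$ for $n\in\mathcal{H}$ and $\bm{\alpha}_m^{k+1}=-\sum_{n\in\mathcal{N}_m}\bm{\lambda}_{n,m}^{k+1}$ for $m\in\mathcal{T}$, with $\bm{\lambda}_{n,m}^{k+1}=\bm{\lambda}_{n,m}^k+\rho(\bm{\hat{\theta}}_n^{k+1}-\bm{\hat{\theta}}_m^{k+1})$; reassembling the vertex sum into an edge sum and invoking the consensus optimality $\bm{\theta}_n^\star=\bm{\theta}_m^\star$ turns $\sum_{n}\langle\bm{\alpha}_n^{k+1},\bm{\theta}_n^\star-\bm{\theta}_n^{k+1}\rangle$ into $-\sum_{(n,m)\in\mathcal{E}}\langle\bm{\lambda}_{n,m}^{k+1},\bm{r}_{n,m}^{k+1}\rangle$. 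Taking expectations gives \eqref{ubound}.

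For the lower bound $(ii)$, I would anchor convexity at the optimum: $f_n(\bm{\theta}_n^{k+1})-f_n(\bm{\theta}_n^\star)\ge\langle\nabla f_n(\bm{\theta}_n^\star),\bm{\theta}_n^{k+1}-\bm{\theta}_n^\star\rangle$, and use $\nabla f_n(\bm{\theta}_n^\star)=-\bm{\alpha}_n^\star$ from \eqref{optimality}. Summing over $n$, decomposing $\bm{\alpha}^\star$ into the edge dual variables $\bm{\lambda}_{n,m}^\star$ exactly as above, and using $\bm{\theta}_n^\star=\bm{\theta}_m^\star$ once more, the vertex sum collapses to $-\sum_{(n,m)\in\mathcal{E}}\langle\bm{\lambda}_{n,m}^\star,\bm{r}_{n,m}^{k+1}\rangle$; taking expectations yields \eqref{lbound}. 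The main obstacle is the algebraic bookkeeping in $(i)$: eliminating $\bm{\alpha}_n^k$ through the dual recursion so that only $\bm{\alpha}_n^{k+1}$, the total error $\bm{\epsilon}_n^{k+1}$, and — for head workers only — the dual residual $\bm{s}_n^{k+1}$ survive, and keeping the head/tail sign convention for $\bm{\lambda}_{n,m}$ consistent when passing from the vertex sum to the edge sum; once these are handled, both bounds follow directly from convexity and the definitions of the residuals and the total error.
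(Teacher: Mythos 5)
Your proposal is correct and follows essentially the same route as the paper: both derive the stationarity conditions of \eqref{headupdate} and \eqref{tailupdate}, eliminate $\bm{\alpha}_n^k$ via \eqref{alphaupdate} so that the gradients read $-\bm{\alpha}_n^{k+1}-\rho d_n\bm{\epsilon}_n^{k+1}-\bm{s}_n^{k+1}$ (head) and $-\bm{\alpha}_m^{k+1}-\rho d_m\bm{\epsilon}_m^{k+1}$ (tail), invoke convexity (equivalently, that $\bm{\theta}_n^{k+1}$ minimizes the linearly perturbed objective), and pass from vertex sums over $\bm{\alpha}$ to edge sums over $\bm{\lambda}$ using the antisymmetry $\bm{\lambda}_{m,n}=-\bm{\lambda}_{n,m}$ and the consensus $\bm{\theta}_n^\star=\bm{\theta}_m^\star$; part $(ii)$ anchored at $\nabla f_n(\bm{\theta}_n^\star)=-\bm{\alpha}_n^\star$ is likewise identical. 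No gaps.
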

\begin{proof}
The details of the proof are deferred to Appendix \ref{prooflemma1}.
\end{proof}
Next, we present the first theorem that states the asymptotic convergence of the proposed algorithm where we prove the convergence to zero in the mean square sense of both the primal and dual residuals as well as the convergence to zero in the mean sense of the optimality gap.
\begin{theorem}\label{thm1}
Suppose assumptions 1-3 hold, then the CQ-GGADMM iterates lead to 
\begin{enumerate}
\item[$(i)$] the convergence of the primal residual to zero in the mean square sense as $k \rightarrow \infty$, \ie
\begin{align}
\underset{k \rightarrow \infty}{\lim} \mathbb{E}\left[\|\bm{r}_{n,m}^{k}\|^2\right] = 0, ~\forall (n,m) \in \mathcal{E},
\end{align}
\item[$(ii)$] the convergence of the dual residual to zero in the mean square sense as $k \rightarrow \infty$, \ie
\begin{align}
\underset{k \rightarrow \infty}{\lim} \mathbb{E}\left[\|\bm{s}_{n}^{k}\|^2\right] = 0, ~\forall n \in \mathcal{H},
\end{align}
\item[$(iii)$] the convergence of the optimality gap to zero in the mean sense as $k \rightarrow \infty$, \ie
\begin{align}
\underset{k \rightarrow \infty}{\lim} \sum_{n=1}^N \mathbb{E}\left[f_n(\bm{\theta}_{n}^{k}) - f_n(\bm{\theta}_{n}^\star)\right] = 0.
\end{align}
\end{enumerate}
\end{theorem}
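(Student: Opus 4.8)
The plan is to turn Lemma~\ref{lemma1} into a one-step descent inequality for a suitable Lyapunov (energy) function, perturbed by error terms that decay geometrically, and then to extract the three claimed limits by a summability argument. Since only convexity (Assumptions~1--3), not strong convexity, is available here, we cannot hope for a contraction and must instead accumulate the per-iteration progress.

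First I would subtract the lower bound \eqref{lbound} from the upper bound \eqref{ubound}: the optimality gap cancels and, after moving the dual-variable terms to one side, this leaves
\[
\sum_{(n,m)\in\mathcal{E}}\mathbb{E}\!\left[\langle \bm{\lambda}_{n,m}^{k+1}-\bm{\lambda}_{n,m}^{\star},\, \bm{r}_{n,m}^{k+1}\rangle\right]\le \sum_{n\in\mathcal{H}}\mathbb{E}\!\left[\langle \bm{s}_{n}^{k+1},\, \bm{\theta}_{n}^{\star}-\bm{\theta}_{n}^{k+1}\rangle\right]+\rho\sum_{n=1}^{N}d_{n}\,\mathbb{E}\!\left[\langle \bm{\epsilon}_{n}^{k+1},\, \bm{\theta}_{n}^{\star}-\bm{\theta}_{n}^{k+1}\rangle\right].
\]
Next I would feed in the dual recursion. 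Writing \eqref{alphaupdate} per edge gives $\bm{\lambda}_{n,m}^{k+1}-\bm{\lambda}_{n,m}^{k}=\rho(\hat{\bm{\theta}}_{n}^{k+1}-\hat{\bm{\theta}}_{m}^{k+1})=\rho(\bm{r}_{n,m}^{k+1}-\bm{\epsilon}_{n}^{k+1}+\bm{\epsilon}_{m}^{k+1})$, so $\bm{r}_{n,m}^{k+1}=\tfrac{1}{\rho}(\bm{\lambda}_{n,m}^{k+1}-\bm{\lambda}_{n,m}^{k})+\bm{\epsilon}_{n}^{k+1}-\bm{\epsilon}_{m}^{k+1}$; substituting into the left-hand side and invoking the three-point identity $2\langle a-c,\,a-b\rangle=\|a-b\|^{2}+\|a-c\|^{2}-\|b-c\|^{2}$ turns $\langle \bm{\lambda}_{n,m}^{k+1}-\bm{\lambda}_{n,m}^{\star},\,\bm{\lambda}_{n,m}^{k+1}-\bm{\lambda}_{n,m}^{k}\rangle$ into the telescoping increment $\tfrac12(\|\bm{\lambda}_{n,m}^{k+1}-\bm{\lambda}_{n,m}^{\star}\|^{2}-\|\bm{\lambda}_{n,m}^{k}-\bm{\lambda}_{n,m}^{\star}\|^{2})$ plus a nonnegative $\tfrac12\|\bm{\lambda}_{n,m}^{k+1}-\bm{\lambda}_{n,m}^{k}\|^{2}$. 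For the term $\langle \bm{s}_{n}^{k+1},\bm{\theta}_{n}^{\star}-\bm{\theta}_{n}^{k+1}\rangle$ I would use the first-order optimality conditions of the head and tail subproblems \eqref{headupdate}--\eqref{tailupdate} together with the characterization \eqref{optimality}; this is the GADMM decomposition, now carried out with the censored iterates $\hat{\bm{\theta}}$, and it rewrites that term as a telescoping increment in $\sum_{m\in\mathcal{T}}\|\hat{\bm{\theta}}_{m}^{k}-\bm{\theta}_{m}^{\star}\|^{2}$ plus negative multiples of $\|\bm{r}_{n,m}^{k+1}\|^{2}$ and $\|\bm{s}_{n}^{k+1}\|^{2}$.

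Collecting these identities yields a Lyapunov function of the form $V^{k}=\tfrac{1}{\rho}\sum_{(n,m)\in\mathcal{E}}\|\bm{\lambda}_{n,m}^{k}-\bm{\lambda}_{n,m}^{\star}\|^{2}+\rho\sum_{m\in\mathcal{T}}d_{m}\|\hat{\bm{\theta}}_{m}^{k}-\bm{\theta}_{m}^{\star}\|^{2}$ and an inequality $\mathbb{E}[V^{k+1}]\le \mathbb{E}[V^{k}]-\mathbb{E}[P^{k+1}]+\mathbb{E}[E^{k+1}]$, where $P^{k+1}\ge0$ is a positive combination of $\sum_{(n,m)}\|\bm{r}_{n,m}^{k+1}\|^{2}$ and $\sum_{n\in\mathcal{H}}\|\bm{s}_{n}^{k+1}\|^{2}$ (after tuning $\rho$ so the leftover cross terms carry the right sign), and $E^{k+1}$ gathers the inner products against $\bm{\epsilon}^{k+1}$. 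By Cauchy--Schwarz and the error bound \eqref{err_bound}, $\mathbb{E}[E^{k+1}]\le c_{1}\psi^{k}\sqrt{\mathbb{E}[V^{k+1}]}+c_{2}\psi^{k}\sqrt{\mathbb{E}[V^{k}]}+c_{3}\psi^{2k}$.

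The main obstacle is exactly that $\mathbb{E}[E^{k+1}]$ is controlled only through $\sqrt{\mathbb{E}[V^{k}]}$, and hence is not a priori summable; so I would first establish that $\{\mathbb{E}[V^{k}]\}$ is bounded. Setting $a_{k}=\sqrt{\mathbb{E}[V^{k}]}$ and dropping $-\mathbb{E}[P^{k+1}]\le 0$, the descent inequality gives $a_{k+1}^{2}\le a_{k}^{2}+c\,\psi^{k}(a_{k+1}+a_{k})+c_{3}\psi^{2k}$; completing the square on the left bound $a_{k+1}^{2}-c\psi^k a_{k+1}$ and taking square roots yields $a_{k+1}\le a_{k}+c'\psi^{k}$, so summing the geometric series shows $\sup_{k}a_{k}<\infty$. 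With $V^{k}$ uniformly bounded we get $\sum_{k}\mathbb{E}[E^{k+1}]<\infty$, and telescoping the descent inequality then gives $\sum_{k}\mathbb{E}[P^{k+1}]<\infty$; in particular $\mathbb{E}[\|\bm{r}_{n,m}^{k}\|^{2}]\to0$ for all $(n,m)\in\mathcal{E}$ and $\mathbb{E}[\|\bm{s}_{n}^{k}\|^{2}]\to0$ for all $n\in\mathcal{H}$, which are $(i)$ and $(ii)$. Finally, for $(iii)$: the lower bound \eqref{lbound} with Cauchy--Schwarz gives $\sum_{n=1}^{N}\mathbb{E}[f_{n}(\bm{\theta}^{k})-f_{n}(\bm{\theta}^{\star})]\ge -c\big(\sum_{(n,m)}\mathbb{E}[\|\bm{r}_{n,m}^{k}\|^{2}]\big)^{1/2}\to0$, while the upper bound \eqref{ubound} has a right-hand side built only from $\bm{r}^{k}$, $\bm{s}^{k}$ and $\bm{\epsilon}^{k}$ (each vanishing in the appropriate sense by $(i)$, $(ii)$ and \eqref{err_bound}) paired with the now-bounded factors $\bm{\lambda}^{k}$ and $\bm{\theta}^{\star}-\bm{\theta}^{k}$, hence $\to0$; squeezing between the two bounds gives $(iii)$.
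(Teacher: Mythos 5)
Your proposal is correct and follows the same overall architecture as the paper's proof: combine the two bounds of Lemma~\ref{lemma1}, substitute the dual recursion $\bm{\lambda}_{n,m}^{k+1}=\bm{\lambda}_{n,m}^{k}+\rho\bm{r}_{n,m}^{k+1}+\rho(\bm{\epsilon}_m^{k+1}-\bm{\epsilon}_n^{k+1})$ to telescope the $\bm{\lambda}$-terms, fold the $\bm{s}$-term into a telescoping sum over the tail iterates, obtain a perturbed descent inequality for a Lyapunov function $V^k$, prove boundedness, sum, and finish (i)--(iii) with Cauchy--Schwarz exactly as you describe. The one place where you genuinely diverge is the crux of the argument: how to absorb the error cross terms, which a priori scale like $\psi^k\sqrt{\mathbb{E}[V^{k}]}$ and are therefore not summable before boundedness is known. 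The paper handles this by applying Young's inequality with \emph{iteration-dependent} parameters ($\eta_1,\eta_3\propto\psi^k$), which converts the cross terms into a multiplicative perturbation $\mathbb{E}[V^{k+1}]\le(1-\tfrac{\psi^k}{2\psi^0})^{-1}(\mathbb{E}[V^k]+\gamma_1\psi^k+\gamma_2\psi^{2k})$ and then controls the infinite product $\prod_j(1-\tfrac{\psi^j}{2\psi^0})^{-1}$ via its logarithm. You instead keep the cross terms as they are, set $a_k=\sqrt{\mathbb{E}[V^k]}$, and complete the square to extract the additive recursion $a_{k+1}\le a_k+c'\psi^k$; this is arguably cleaner and avoids the infinite-product estimate entirely, at the cost of having to be slightly more careful that the dropped term $-\mathbb{E}[P^{k+1}]$ really is nonpositive before boundedness is established (it is, since $P^{k+1}$ collects only squared norms). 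Two small bookkeeping remarks: the paper's surviving positive terms are $\tfrac{\rho}{2}\|\bm{\theta}_m^{k+1}-\bm{\theta}_m^{k}\|^2$ and $\tfrac{1}{4\rho}\|\bm{\lambda}_{n,m}^{k+1}-\bm{\lambda}_{n,m}^{k}\|^2$ rather than $\|\bm{r}\|^2$ and $\|\bm{s}\|^2$ directly (the residuals are then recovered through the identities $\bm{r}_{n,m}^{k+1}=\tfrac{1}{\rho}(\bm{\lambda}_{n,m}^{k+1}-\bm{\lambda}_{n,m}^{k})+\bm{\epsilon}_n^{k+1}-\bm{\epsilon}_m^{k+1}$ and $\bm{s}_n^{k+1}=\rho\sum_{m}(\bm{\hat{\theta}}_m^{k+1}-\bm{\hat{\theta}}_m^{k})$, using $\mathbb{E}[\|\bm{\epsilon}_n^k\|^2]\to0$); and no tuning of $\rho$ is actually required for this theorem, only the choice of the Young parameters. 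Neither point affects the validity of your plan.
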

\begin{proof}
The proof can be found in Appendix \ref{proofthm1}.
\end{proof}
The linear convergence of the CQ-GGADMM algorithm is presented next.
\begin{theorem}\label{thm2}
Suppose that assumptions 1, 2, 4 and 5 hold and the dual variable $\bm{\alpha}$ is initialized such that $\bm{\alpha}^0$ lies in the column space of the signed incidence matrix $\bm{M}_{-}$. For sufficiently small $\kappa$ and $\rho$, the sequence of iterates of CQ-GGADMM converges linearly with a rate $(1+\delta_2)/2$ where $\delta_2 = \max\{(1+\kappa)^{-1}, \psi^2\}$.
\end{theorem}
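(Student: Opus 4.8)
The plan is to construct a suitable Lyapunov (energy) function that captures both the distance of the primal variables to the optimum and the distance of the dual variable $\bm{\alpha}$ to its optimal value $\bm{\alpha}^\star$, and to show that this function contracts geometrically up to an additive term that itself decays like $\psi^{2k}$. Concretely, I would work with the matrix form of the iteration: stack all local models into a vector $\bm{\theta}$, write the constraint as $\bm{M}\bm{\theta} = \bm 0$ using the incidence matrix, split $\bm{M} = \bm{M}_+ + \bm{M}_-$ into its unsigned and signed parts as in the GADMM analysis of \cite{anis2020}, and recast the head/tail updates \eqref{headupdate}--\eqref{alphaupdate} in terms of $\bm{M}_+$, $\bm{M}_-$ and the perturbed iterates $\bm{\hat\theta}^{k} = \bm{\theta}^{k} - \bm{\epsilon}^{k}$. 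The assumption that $\bm{\alpha}^0$ lies in the column space of $\bm{M}_-$ is exactly what guarantees $\bm{\alpha}^k$ stays in that subspace for all $k$, so that the dual error can be controlled through the smallest nonzero singular value of $\bm{M}_-$; I would record this as a preliminary observation.

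The core of the argument would proceed as follows. First, use the first-order optimality conditions for the head and tail subproblems \eqref{headupdate}, \eqref{tailupdate} together with the optimality conditions \eqref{optimality} to obtain identities relating $\nabla f_n(\bm{\theta}_n^{k+1}) - \nabla f_n(\bm{\theta}_n^\star)$ to the dual residual $\bm{s}_n^{k+1}$, the primal residual $\bm{r}_{n,m}^{k+1}$, the dual-variable gap $\bm{\alpha}_n^{k+1}-\bm{\alpha}_n^\star$, and the total error $\bm{\epsilon}_n^{k+1}$. Second, take inner products with $\bm{\theta}_n^{k+1}-\bm{\theta}_n^\star$ and invoke strong convexity (Assumption 4) to produce a term $\mu\|\bm{\theta}^{k+1}-\bm{\theta}^\star\|^2$ on one side; use $L$-Lipschitz gradients (Assumption 5) to bound the cross terms involving $\bm{\epsilon}$ and the dual residual. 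Third, combine these with the dual update $\bm{\alpha}^{k+1} = \bm{\alpha}^k + \rho \bm{M}(\bm{\theta}^{k+1}-\bm{\epsilon}^{k+1})$ (in matrix form) to telescope into an inequality of the shape
\begin{align}
\nonumber V^{k+1} \le \frac{1}{1+\kappa} V^{k} + c_1 \,\mathbb{E}\!\left[\|\bm{\epsilon}^{k+1}\|^2\right] + c_2\, \mathbb{E}\!\left[\|\bm{\epsilon}^{k}\|^2\right],
\end{align}
where $V^{k} = \frac{1}{\rho}\mathbb{E}[\|\bm{\alpha}^{k}-\bm{\alpha}^\star\|^2] + \rho\,\mathbb{E}[\|\bm{M}_+(\bm{\theta}^{k}-\bm{\theta}^\star)\|^2]$ (or a close variant), $\kappa$ is the rate parameter extracted from $\mu$, $L$, $\rho$ and the singular values of $\bm{M}_\pm$, and $c_1,c_2$ are constants. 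Finally, substitute the error bound \eqref{err_bound}, $\mathbb{E}[\|\bm{\epsilon}^k\|^2] \le 4 N C_0^2 \psi^{2k}$, and apply a standard lemma on sequences satisfying $V^{k+1}\le a V^k + b\,\psi^{2k}$ with $a = (1+\kappa)^{-1}$: since $\delta_2 = \max\{a,\psi^2\}$, one gets $V^{k+1} = O\big(((1+\delta_2)/2)^k\big)$, because averaging the two rates $a$ and $\psi^2$ strictly dominates both when they differ and handles the borderline case. This yields the claimed linear rate $(1+\delta_2)/2$.

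The main obstacle I anticipate is getting a clean descent inequality with a genuinely contractive coefficient on $V^k$ rather than merely a non-expansive one — that is, isolating the $\kappa$. In the exact (no-error) GADMM setting this requires delicately balancing the strong-convexity gain $\mu\|\bm{\theta}^{k+1}-\bm{\theta}^\star\|^2$ against the dual progress and absorbing the $\bm{M}_+$ cross terms, and it forces $\rho$ and $\kappa$ to be small (hence the "sufficiently small $\kappa$ and $\rho$" hypothesis); with the censoring/quantization errors $\bm{\epsilon}^k$ injected into both the primal updates and the dual update, one must make sure these errors only ever appear multiplied by bounded quantities so that they land in the additive $O(\psi^{2k})$ remainder and never degrade the contraction factor. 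Bounding the term $\rho\sum_n d_n \mathbb{E}[\langle\bm{\epsilon}_n^{k+1},\bm{\theta}_n^\star-\bm{\theta}_n^{k+1}\rangle]$ from Lemma~\ref{lemma1}(i) via Young's inequality, splitting a fraction of the $\mu\|\bm{\theta}^{k+1}-\bm{\theta}^\star\|^2$ budget to cover it, is the step that needs the most care, since it is what ties the admissible range of $\kappa$ to $\mu$, $L$, $\rho$, the $d_n$'s, and the spectrum of the incidence matrix. The remaining ingredients — the matrix reformulation, the telescoping, and the final sequence lemma — are routine once that inequality is in hand.
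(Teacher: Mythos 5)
Your proposal follows essentially the same route as the paper's proof: the matrix reformulation via the incidence matrices, the column-space restriction $\bm{\alpha}^k = \bm{M}_-\bm{\beta}^k$ controlled through $\tilde{\sigma}_{\min}(\bm{M}_-)$, the strong-convexity/Lipschitz combination with Young's inequality to absorb the censoring and quantization errors into an $O(\psi^{2k})$ remainder, the resulting recursion $V^{k+1}\le (1+\kappa)^{-1}V^k + O(\psi^{2k})$ with smallness conditions on $\rho$ and $\kappa$ extracted from a quadratic inequality, and the final rate via $\delta_2=\max\{(1+\kappa)^{-1},\psi^2\}$. You also correctly identify the genuinely delicate step (keeping the contraction factor intact while the error cross-terms are peeled off), so the sketch is faithful to the paper's argument.
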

\begin{proof}
The proof is provided in Appendix \ref{proofthm2} where the conditions on $\kappa$ and $\rho$ are derived. In the proof, we require an extra initialization condition that $\bm{\alpha}^0$ lies in the column space of $\bm{M}_{-}$, by taking $\bm{\alpha}^0 = \bm{0}$. Thus, we ensure that $\bm{\alpha}^{k}$ will always stay in the column space of $\bm{M}_{-}$ and we can write $\bm{\alpha}^{k} =\bm{M}_{-} \bm{\beta}^k$. The convergence rate, derived in the proof, depends on the network topology through the values of $\sigma_{\max}(\bm{C})$, $\sigma_{\max}(\bm{M}_{-})$ and $\tilde{\sigma}_{\min}(\bm{M}_{-})$, the properties of the local objective functions ($\mu$ and $L$), the penalty parameter $\rho$ but also on the threshold parameter $\xi$ as well as the parameter $\omega$ used to construct the quantization step sizes.
\end{proof}
\section{Numerical Results}
\label{eval}
To validate our theoretical results, we numerically evaluate the performance of CQ-GGADMM compared with GGADMM, C-GGADMM, and C-ADMM}~\cite{liu2019}. Note that C-ADMM performs censoring on top of the Jacobian and decentralized version of the standard ADMM. For the tuning parameters, we choose the values leading to the best performance of all algorithms. 

\noindent \textbf{Model and Datasets.} All simulations are conducted using synthetic and real datasets. For the synthetic data, we used the datasets that were generated in~\cite{chen2018}. We consider two decentralized consensus optimization problems: $(i)$ linear regression, and $(ii)$ logistic regression. The details about the datasets used in our experiments are summarized in Table \ref{table}. For each dataset, the number of samples are uniformly distributed across the $N$ workers.


\begin{table*}[h]
\centering
\begin{tabular}{|l|l|l|c|c|}
\hline
\textbf{Dataset} & \textbf{Task} & \textbf{Data Type} & \textbf{Model Size ($d$)} & \textbf{Number of Instances} \\ \hline \hline
synth-linear \cite{chen2018} & linear regression     & synthetic &50  & 1200 \\ \hline
Body Fat \cite{Dua:2019} & linear regression     & real      & $14$ & $252$ \\ \hline
synth-logistic \cite{chen2018}& logistic regression & synthetic &50  &  1200\\ \hline
Derm \cite{Dua:2019}& logistic regression & real      &  34&  358\\ \hline
\end{tabular}
\caption{List of datasets used in the numerical experiments.}
\label{table} 
\end{table*}


\noindent \textbf{Communication Energy.} We assume that the total system bandwidth $2$MHz is equally divided across workers. Therefore, the available bandwidth to the $n$-th worker ($B_n$) at every communication round when utilizing GGADMM is $(4/N)$MHz since only half of the workers are transmitting at each communication round. On the other hand, the available bandwidth to each worker when using C-ADMM  is $(2/N)$MHz. The power spectral density ($N_0$) is $10^{-6}$W$/$Hz, and each upload/download transmission time ($\tau$) is $1ms$. We assume a free space model, and each worker needs to transmit at a power level that allows transmitting the model vector in one communication round (the rate is bottlenecked by the worst link). For example, using C-ADMM, each worker needs to find the transmission power that achieves the transmission rate $R=(32d/1ms)$ bits/sec. Using Shannon capacity, the corresponding transmission power can be calculated as $P=\tau D^2 N_0 B_n \left(2^{R/B_n}-1\right)$, and the consumed energy will be $E=P \tau$.
\subsection{Linear Regression}
In this case, the local cost function at worker $n$ is explicitly given by $f_n(\bm{\theta}) = \frac{1}{2} \| \bm{X}_n \bm{\theta} - \bm{y}_n\|^2$ where $\bm{X}_n \in \mathbb{R}^{s \times d}$ and $\bm{y}_n \in \mathbb{R}^{s \times 1}$ are private for each worker $n \in \mathcal{V}$ where $s$ represents the size of the data at each worker. Figs.~\ref{Fig_LR_synth}-(a) and \ref{Fig_LR_real}-(a) corroborate that both C-GGADMM and CQ-GGADMM achieve the same convergence speed as GGADMM and significantly outperform C-ADMM, thanks to the the alternation update, censoring, and stochastic quantization. Note that though, C-ADMM allows workers to update their models in parallel, it requires significantly higher number of iterations. Figs.~\ref{Fig_LR_synth}-(b) and \ref{Fig_LR_real}-(b) show that C-GGADMM achieves $10^{-4}$ objective error with the minimum number of communication rounds outperforming all other algorithms. We also note that introducing quantization on top of censoring has increased the number of communication rounds. However, in terms of the total number of transmitted bits and consumed energy, CQ-GGADMM outperforms all algorithms.
\begin{figure*}[t]
\centering
\includegraphics[width=\textwidth]{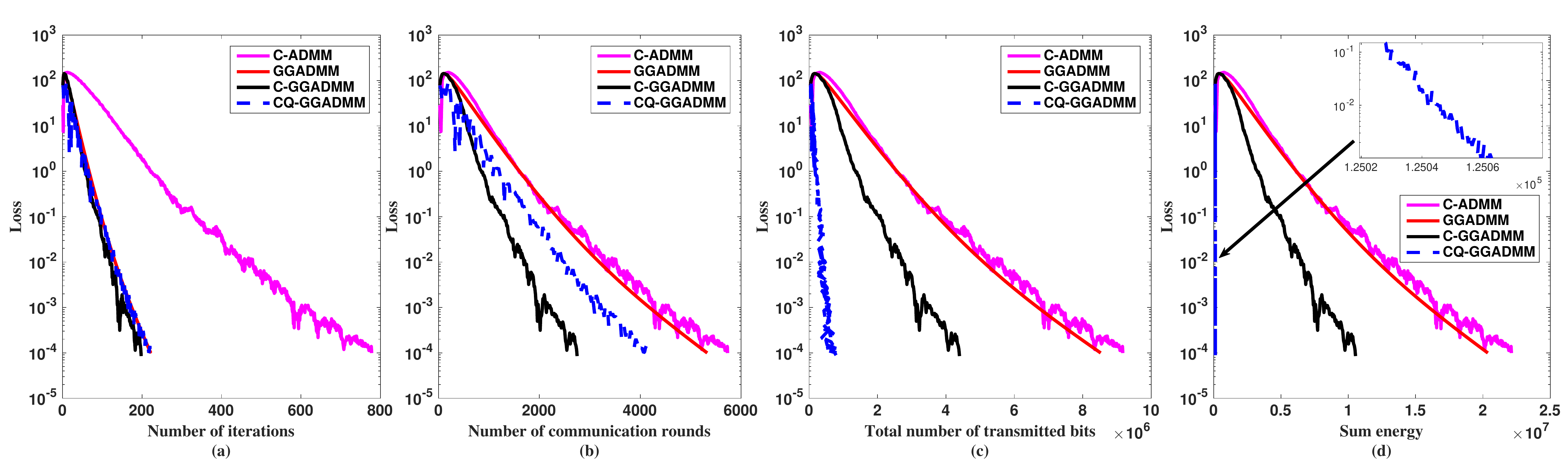}
\caption{\emph{Linear regression} results on synthetic dataset showing loss w.r.t.: (a) \# iterations; (b) \# communication rounds; (c) \# transmitted bits; (d) total energy.}
\label{Fig_LR_synth} 
\end{figure*}
\begin{figure*}[t]
\centering
\includegraphics[width=\textwidth]{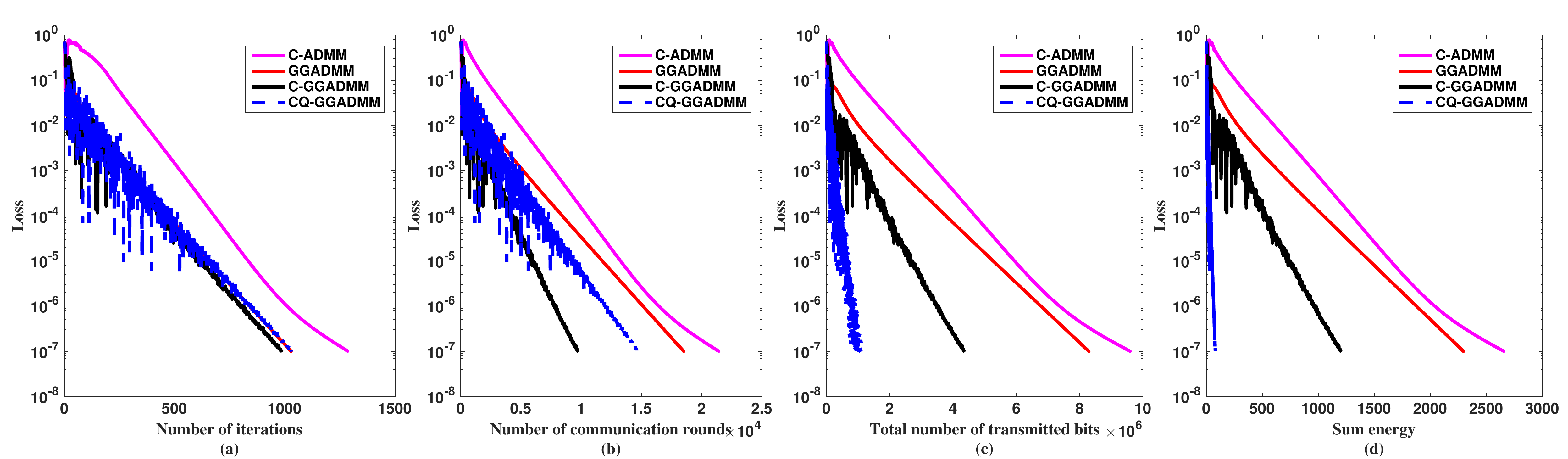}
\caption{\emph{Linear regression} results on real dataset showing loss w.r.t.: (a) \# iterations; (b) \# communication rounds; (c) \# transmitted bits; (d) total energy.}
\label{Fig_LR_real} 
\end{figure*}
\subsection{Logistic Regression}
In this section, we consider the binary logistic regression problem. We assume that worker $n$ owns a data matrix $\bm{X}_n = (\bm{x}_{n,1}, \dots, \bm{x}_{n,s})^T \in \mathbb{R}^{s \times d}$ along with the corresponding labels $\bm{y}_n = (y_{n,1}, \dots, y_{n,s}) \in \{-1, 1\}^{s}$. The local cost function for worker $n$ is then given by $f_n(\bm{\theta}) = \frac{1}{s} \sum_{j=1}^s \log\left( 1 + \exp\left(- y_{n,j} \bm{x}_{n,j}^T \bm{\theta} \right)\right) + \frac{\mu_0}{2} \|\bm{\theta}\|^2$ where $\mu_0$ is the regularization parameter. As observed from Figs.~\ref{Fig_LogReg_synth}-(a) and \ref{Fig_LogReg_real}-(a), C-GADMM requires more iterations compared to GADMM to achieve the same loss which leads to either no saving in the number of communication rounds  (see Fig.~\ref{Fig_LogReg_synth}-(b)) or a small saving in the number of communication rounds (see Fig.~\ref{Fig_LogReg_real}-(b)). It also appears that the update of each individual worker when not quantizing is important at each iteration and censoring hurts the convergence speed. However, interestingly, when introducing stochastic quantization and performing censoring on top of the quantized models, we overcome this issue, and we show significant savings in the number of communication rounds and the communication overhead per iteration. 
\begin{figure*}[t]
\centering
\includegraphics[width=\textwidth]{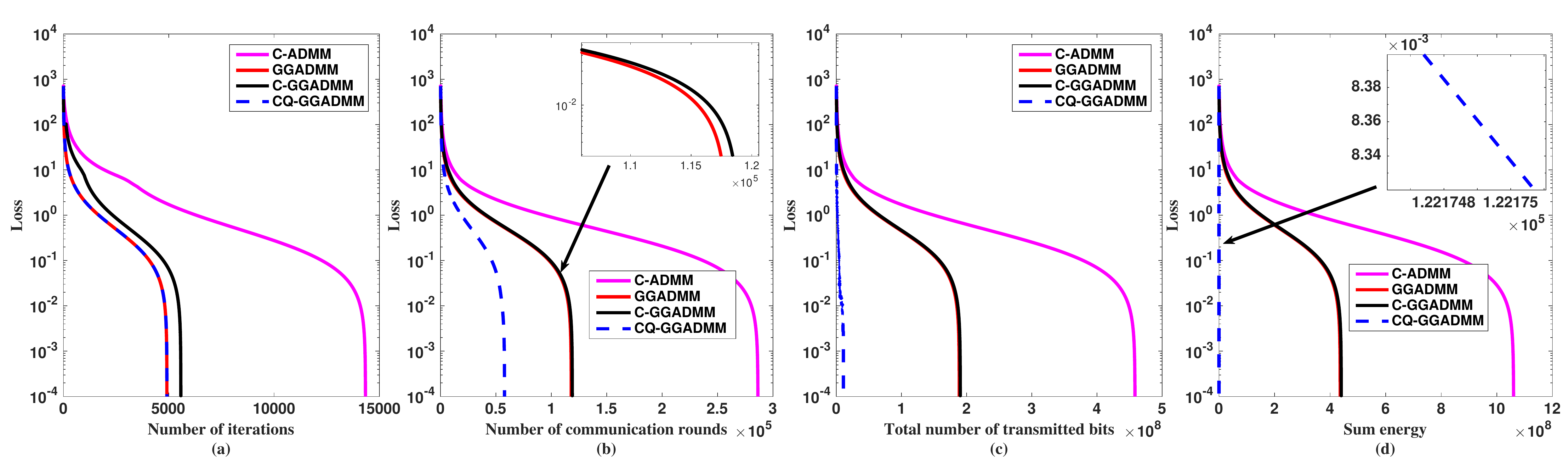}
\caption{\emph{Logistic regression} results on synthetic dataset showing loss w.r.t.: (a) \# iterations; (b) \# communication rounds; (c) \# transmitted bits; (d) total energy.}
\label{Fig_LogReg_synth} 
\end{figure*}
\begin{figure*}[t]
\centering
\includegraphics[width=\textwidth]{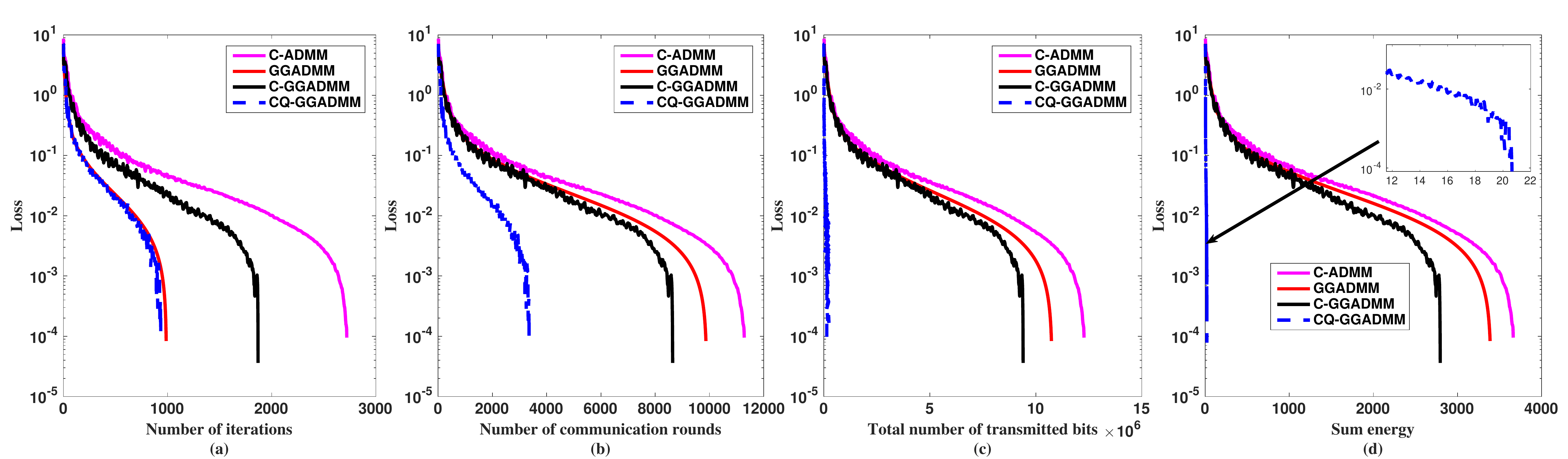}
\caption{\emph{Logistic regression} results on real dataset showing loss w.r.t.: (a) \# iterations; (b) \# communication rounds; (c) \# transmitted bits; (d) total energy.}
\label{Fig_LogReg_real} 
\end{figure*}

\section{Conclusions}\label{SecConc}
In this paper, we have proposed a communication-efficiently decentralized ML algorithm that extends GADMM and Q-GADMM to arbitrary topologies. Moreover, the proposed algorithm leverages censoring (sparsification) to minimize the number of communication rounds for each worker. Utilizing a decreasing sequence of censoring threshold, stochastic quantization, and adjusting the quantization range at every iteration such that a linear convergence rate is achieved are key features that make CQ-GGADMM robust to errors while ensuring its convergence guarantees. Numerical results in convex linear and logistic regression tasks corroborate the advantages of CQ-GGADMM over GGADMM, and C-ADMM.
\section{Appendices}
\subsection{Basic identities and inequalities}\label{identities_inequalities}
\noindent For any two vectors $\bm{x}$, $\bm{y} \in \mathbb{R}^d$, we have 
\small
\begin{align}
\|\bm{x} + \bm{y} \|^2 &\leq 2 \left( \| \bm{x} \|^2 + \| \bm{y} \|^2 \right), ~\forall \bm{x}, \bm{y} \in \mathbb{R}^d, \label{id1} \\
2 \langle \bm{x}, \bm{y} \rangle &\leq \frac{1}{\eta} \| \bm{x} \|^2 + \eta \| \bm{y} \|^2, ~\forall \bm{x}, \bm{y} \in \mathbb{R}^d, ~\eta > 0, \label{id2}
\end{align}
\normalsize
\small
For any two matrices $\bm{A}$ and $\bm{B}$, we have
\begin{align}
2 \langle \bm{A}, \bm{B} \rangle &\leq \eta \|\bm{A}\|_F^2 + \frac{1}{\eta} \|\bm{B}\|_F^2,  ~\forall \eta > 0, \label{id4}\\
\|\bm{A} \bm{B}\|_F &\leq \sigma_{\max}(\bm{A}) \|\bm{B}\|_F,
\label{id5} \\
\|\bm{A} + \bm{B}\|_F^2 &\leq \eta \|\bm{A}\|_F^2 + \frac{\eta}{\eta-1} \|\bm{B}\|_F^2, ~\forall \eta > 1, \label{id7}
\end{align}
\normalsize
where $\sigma_{\max}(\bm{A})$ denotes the maximum singular value of the matrix $A$.
\subsection{Proof of Lemma \ref{lemma1}}\label{prooflemma1}
Using \eqref{headupdate} the update of the head workers can be written as
\small
\begin{align}\label{head_eq}
\nabla f_n(\bm{\theta}_n^{k+1}) + \bm{\alpha}_n^k - \rho \sum_{m \in \mathcal{N}_n} \bm{\hat{\theta}}_m^{k} + \rho d_{n} \bm{\theta}_n^{k+1} = \bm{0}.
\end{align}
\normalsize
Using the update of $\bm{\alpha}_n^k$ in \eqref{alphaupdate}, and the definition of the dual residual, we get
\small
\begin{align}
\nabla f_n(\bm{\theta}_n^{k+1}) + \bm{\alpha}_n^{k+1} + \rho d_{n} \bm{\epsilon}_n^{k+1} + \bm{s}_n^{k+1}  = \bm{0}.
\end{align}
\normalsize
Thus, $\bm{\theta}_n^{k+1}$ minimizes the function $f_n(\bm{\theta}_n) + \langle \bm{\alpha}_n^{k+1} +  \rho d_{n} \bm{\epsilon}_n^{k+1}  + \bm{s}_n^{k+1}, \bm{\theta}_n \rangle$ and as a consequence
\small
\begin{align}
\nonumber &\mathbb{E}\left[f_n(\bm{\theta}_n^{k+1})\!+\!\langle \bm{\alpha}_n^{k+1}\!+\!\rho d_{n} \bm{\epsilon}_n^{k+1}\!+\! \bm{s}_n^{k+1}, \bm{\theta}_n^{k+1} \rangle \right]\!\\
&\leq\! \mathbb{E}\left[f_n(\bm{\theta}_n^\star)\!+\!\langle \bm{\alpha}_n^{k+1}\!+\!\rho d_{n} \bm{\epsilon}_n^{k+1}\!+\! \bm{s}_n^{k+1}, \bm{\theta}_n^\star \rangle \right].
\end{align}
\normalsize
Similarly, using the update of the tail workers as in  \eqref{tailupdate}, we can write
\small
\begin{align}\label{tail_eq}
\nabla f_m(\bm{\theta}_m^{k+1}) + \bm{\alpha}_m^k - \rho \sum_{n \in \mathcal{N}_m} \bm{\hat{\theta}}_n^{k+1} + \rho d_{m} \bm{\theta}_m^{k+1} = \bm{0}.
\end{align}
\normalsize
Hence, we get $\nabla f_m(\bm{\theta}_m^{k+1}) + \bm{\alpha}_m^{k+1} + \rho d_{m} \bm{\epsilon}_m^{k+1}  = \bm{0}$. Thus, we can observe that the dual feasibility condition is fulfilled by the tail workers and $\bm{\theta}_m^{k+1}$ minimizes $f_m(\bm{\theta}_m) + \langle \bm{\alpha}_m^{k+1} + \rho d_{m} \bm{\epsilon}_m^{k+1}, \bm{\theta}_m \rangle$. Therefore, we obtain the following inequality
\small
\begin{align}
\nonumber &\mathbb{E}\left[f_m(\bm{\theta}_m^{k+1})\right]\!+\!\mathbb{E}\left[\langle \bm{\alpha}_m^{k+1}\!+\!\rho d_{m} \bm{\epsilon}_m^{k+1}, \bm{\theta}_m^{k+1} \rangle \right]\\
&\leq\!\mathbb{E}\left[f_m(\bm{\theta}_m^\star)\right]\!+\!\mathbb{E}\left[\langle \bm{\alpha}_m^{k+1}\!+\!\rho d_{m} \bm{\epsilon}_m^{k+1}, \bm{\theta}_m^\star \rangle \right].
\end{align}
\normalsize
Summing over all workers, we get
\small
\begin{align}\label{peq}
\nonumber &\sum_{n=1}^N \mathbb{E}\left[f_n(\bm{\theta}_n^{k+1})-f_n(\bm{\theta}_n^\star)\right]
\\ \nonumber & \leq \sum_{n \in \mathcal{H}} \mathbb{E}\left[\langle \bm{\alpha}_n^{k+1} + \bm{s}_n^{k+1} + \rho d_{n} \bm{\epsilon}_n^{k+1}, \bm{\theta}_n^\star - \bm{\theta}_n^{k+1} \rangle \right]  \\
\nonumber & + \sum_{m \in \mathcal{T}} \mathbb{E}\left[\langle \bm{\alpha}_m^{k+1} + \rho d_{m} \bm{\epsilon}_m^{k+1}, \bm{\theta}_m^\star - \bm{\theta}_m^{k+1} \rangle \right] \\
&+ \rho \sum_{n=1}^N \mathbb{E}\left[\langle d_{n} \bm{\epsilon}_n^{k+1} , \bm{\theta}_n^\star - \bm{\theta}_n^{k+1} \rangle \right].
\end{align}
\normalsize
Using the definition of $\bm{\alpha}_n^{k+1}, ~n\in \mathcal{V}$ in the right hand-side of \eqref{peq}
\small
\begin{align}
\nonumber &\sum_{n \in \mathcal{H}} \mathbb{E}\left[\langle \bm{\alpha}_n^{k+1}, \bm{\theta}_n^\star\!-\!\bm{\theta}_n^{k+1} \rangle \right]\!+\!\sum_{m \in \mathcal{T}} \mathbb{E}\left[\langle \bm{\alpha}_m^{k+1}, \bm{\theta}_m^\star\!-\!\bm{\theta}_m^{k+1} \rangle \right] \\
&=\!\sum_{(n,m) \in \mathcal{E}} \mathbb{E}\left[\langle \bm{\lambda}_{n,m}^{k+1}, \bm{\theta}_n^\star\!-\!\bm{\theta}_n^{k+1} \rangle \right]\!+\!\sum_{(n,m) \in \mathcal{E}} \mathbb{E}\left[\langle \bm{\lambda}_{m,n}^{k+1}, \bm{\theta}_m^\star\!-\!\bm{\theta}_m^{k+1} \rangle \right].
\end{align}
\normalsize
Using that $\bm{\lambda}_{m,n}^{k+1} = - \bm{\lambda}_{n,m}^{k+1}$, and that $\bm{\theta}_{n}^\star = \bm{\theta}_{m}^\star,~\forall (n, m) \in \mathcal{E}$  we can write
\small
\begin{align}
&\nonumber \sum_{n \in \mathcal{H}} \mathbb{E}\left[\langle \bm{\alpha}_n^{k+1}, \bm{\theta}_n^\star - \bm{\theta}_n^{k+1} \rangle \right] + \sum_{m \in \mathcal{T}} \mathbb{E}\left[\langle \bm{\alpha}_m^{k+1}, \bm{\theta}_m^\star - \bm{\theta}_m^{k+1} \rangle \right] \\
&= - \sum_{(n,m) \in \mathcal{E}} \mathbb{E}\left[\langle \bm{\lambda}_{n,m}^{k+1}, \bm{r}_{n,m}^{k+1} \rangle \right].
\end{align}
\normalsize
This proves $(i)$ of Lemma \ref{lemma1}. To prove $(ii)$, we know from the optimality conditions that $\nabla f_n(\bm{\theta}_n^\star) + \bm{\alpha}_{n}^\star = \bm{0}$. Thus, $\bm{\theta}_n^{\star}$ minimizes the function $f_n(\bm{\theta}_n) + \langle \bm{\alpha}_n^{\star}, \bm{\theta}_n \rangle$ and for $n \in \mathcal{H}$
\small
\begin{align}
\mathbb{E}\left[f_n(\bm{\theta}_n^{\star})\right] + \mathbb{E}\left[\langle \bm{\alpha}_n^{\star}, \bm{\theta}_n^{\star} \rangle \right] \leq \mathbb{E}\left[f_n(\bm{\theta}_n^{k+1})\right] + \mathbb{E}\left[\langle \bm{\alpha}_n^{\star}, \bm{\theta}_n^{k+1} \rangle \right].
\end{align}
\normalsize
Similarly, we have, for $m \in \mathcal{T}$, that 
\small
\begin{align}
\mathbb{E}\left[f_m(\bm{\theta}_m^{\star})\right]\!+\! \mathbb{E}\left[\langle \bm{\alpha}_m^{\star}, \bm{\theta}_m^{\star} \rangle \right]\!\leq\!\mathbb{E}\left[f_m(\bm{\theta}_m^{k+1})\right]\!+\! \mathbb{E}\left[\langle \bm{\alpha}_m^{\star}, \bm{\theta}_m^{k+1} \rangle \right].
\end{align}
\normalsize
Summing over all workers, we get
\small
\begin{align}
\nonumber &\sum_{n=1}^N \mathbb{E}\left[f_n(\bm{\theta}_n^{k+1})-f_n(\bm{\theta}_n^\star)\right] \\
& \nonumber \geq \sum_{n \in \mathcal{H}} \mathbb{E}\left[\langle \bm{\alpha}_n^{k+1}, \bm{\theta}_n^\star - \bm{\theta}_n^{k+1} \rangle \right] + \sum_{m \in \mathcal{T}} \mathbb{E}\left[\langle \bm{\alpha}_m^{k+1}, \bm{\theta}_m^\star - \bm{\theta}_m^{k+1} \rangle \right] 
\\\nonumber &\overset{\mathrm{(a)}}{\geq}\!\sum_{(n,m) \in \mathcal{E}} \!\mathbb{E}\left[\langle \bm{\lambda}_{n,m}^{\star}, \bm{\theta}_n^\star\!-\!\bm{\theta}_n^{k+1} \rangle \right]\!+\!\sum_{(n,m) \in \mathcal{E}}\mathbb{E}\left[ \langle \bm{\lambda}_{m,n}^{\star}, \bm{\theta}_m^\star\!-\! \bm{\theta}_m^{k+1} \rangle \right] \\
& \overset{\mathrm{(b)}}{\geq}\!-\!\sum_{(n,m) \in \mathcal{E}}\!\mathbb{E}\left[\langle \bm{\lambda}_{n,m}^{\star}, \bm{r}_{n,m}^{k+1} \rangle \right],
\end{align}
\normalsize
where we used the definition of $\bm{\alpha}_n^\star$ in $\mathrm{(a)}$ and that $\bm{\lambda}_{m,n}^{k+1} = - \bm{\lambda}_{n,m}^{k+1}$, and that $\bm{\theta}_{n}^\star = \bm{\theta}_{m}^\star$ in $\mathrm{(b)}$.
\subsection{Proof of Theorem \ref{thm1}}\label{proofthm1}
Multiplying \eqref{lbound} by (-1), adding \eqref{ubound} and multiplying the sum by 2, we get
\small
\begin{align}\label{c}
\nonumber &\!2\!\sum_{(n,m) \in \mathcal{E}}\!\mathbb{E}\left[\langle \bm{\lambda}_{n,m}^{\star}\!-\!\bm{\lambda}_{n,m}^{k+1}, \bm{r}_{n,m}^{k+1}\rangle \right]\!+\!2 \sum_{n\in \mathcal{H}} \mathbb{E}\left[\langle \bm{s}_{n}^{k+1}, \bm{\theta}_{n}^\star-\bm{\theta}_{n}^{k+1}\rangle \right]\!\\
&+\!2 \rho \sum_{n=1}^N \mathbb{E}\left[\langle d_{n} \bm{\epsilon}_n^{k+1} , \bm{\theta}_n^\star - \bm{\theta}_n^{k+1} \rangle \right]\!\geq\!0.
\end{align}
\normalsize
Since $\bm{\lambda}_{n,m}^{k+1} = \bm{\lambda}_{n,m}^{k} + \rho \bm{r}_{n,m}^{k+1} + \rho (\bm{\epsilon}_m^{k+1}- \bm{\epsilon}_n^{k+1})$, then we can write
\small
\begin{align} \label{lambdaeq}
\nonumber &2 \sum_{(n,m) \in \mathcal{E}} \mathbb{E}\left[\langle \bm{\lambda}_{n,m}^{\star}-\bm{\lambda}_{n,m}^{k+1}, \bm{r}_{n,m}^{k+1}\rangle \right] \\
\nonumber &=\!2\!\sum_{(n,m) \in \mathcal{E}}\!\mathbb{E}\left[\langle \bm{\lambda}_{n,m}^{\star}-\bm{\lambda}_{n,m}^{k}, \bm{r}_{n,m}^{k+1}\rangle \right]\!-\!2\rho\!\sum_{(n,m) \in \mathcal{E}}\!\mathbb{E}\left[\|\bm{r}_{n,m}^{k+1}\|^2\right]\!\\
&-\!2 \rho\!\sum_{(n,m) \in \mathcal{E}}\!\mathbb{E}\left[\langle \bm{\epsilon}_m^{k+1}\!-\!\bm{\epsilon}_n^{k+1}, \bm{r}_{n,m}^{k+1}\rangle \right].
\end{align}
\normalsize
Using $\bm{r}_{n,m}^{k+1} = \frac{1}{\rho}(\bm{\lambda}_{n,m}^{k+1}-\bm{\lambda}_{n,m}^{\star}) - \frac{1}{\rho}(\bm{\lambda}_{n,m}^{k}-\bm{\lambda}_{n,m}^{\star}) + \bm{\epsilon}_n^{k+1}- \bm{\epsilon}_m^{k+1}$,
we will examine the different terms of \eqref{lambdaeq} starting from the first term
\small
\begin{align}\label{a1}
\nonumber & 2 \sum_{(n,m) \in \mathcal{E}} \mathbb{E}\left[\langle \bm{\lambda}_{n,m}^{\star}-\bm{\lambda}_{n,m}^{k}, \bm{r}_{n,m}^{k+1}\rangle \right] \\
\nonumber &= \frac{2}{\rho} \sum_{(n,m) \in \mathcal{E}} \mathbb{E}\left[\langle \bm{\lambda}_{n,m}^{\star}-\bm{\lambda}_{n,m}^{k}, \bm{\lambda}_{n,m}^{k+1}-\bm{\lambda}_{n,m}^{\star}\rangle \right] \\
\nonumber &+ \frac{2}{\rho} \sum_{(n,m) \in \mathcal{E}} \mathbb{E}\left[\|\bm{\lambda}_{n,m}^{k}-\bm{\lambda}_{n,m}^{\star}\|^2 \right] \\
& + 2 \sum_{(n,m) \in \mathcal{E}} \mathbb{E}\left[\langle \bm{\lambda}_{n,m}^{\star}-\bm{\lambda}_{n,m}^{k}, \bm{\epsilon}_{n}^{k+1}-\bm{\epsilon}_{m}^{k+1}\rangle \right].
\end{align} 
\normalsize
The second term can be re-written as
\small
\begin{align}\label{a2}
\nonumber & - 2 \rho \sum_{(n,m) \in \mathcal{E}} \mathbb{E}\left[\|\bm{r}_{n,m}^{k+1}\|^2 \right]\\
\nonumber &= -\rho \sum_{(n,m) \in \mathcal{E}} \mathbb{E}\left[\|\bm{r}_{n,m}^{k+1}\|^2\right] - \frac{1}{\rho} \sum_{(n,m) \in \mathcal{E}} \mathbb{E}\left[\|\bm{\lambda}_{n,m}^{k+1}-\bm{\lambda}_{n,m}^{\star}\|^2 \right]\\
\nonumber & - \frac{1}{\rho} \sum_{(n,m) \in \mathcal{E}} \mathbb{E}\left[\|\bm{\lambda}_{n,m}^{k}-\bm{\lambda}_{n,m}^{\star}\|^2\right] - \rho \sum_{(n,m) \in \mathcal{E}} \mathbb{E}\left[\|\bm{\epsilon}_{n}^{k+1}-\bm{\epsilon}_{m}^{k+1}\|^2 \right]  \\
\nonumber & + \frac{2}{\rho} \sum_{(n,m) \in \mathcal{E}} \mathbb{E}\left[\langle \bm{\lambda}_{n,m}^{k+1}-\bm{\lambda}_{n,m}^{\star}, \bm{\lambda}_{n,m}^{k}-\bm{\lambda}_{n,m}^{\star}\rangle \right]  \\
\nonumber & - 2 \sum_{(n,m) \in \mathcal{E}} \mathbb{E}\left[\langle \bm{\lambda}_{n,m}^{k+1}-\bm{\lambda}_{n,m}^{\star}, \bm{\epsilon}_{n}^{k+1}-\bm{\epsilon}_{m}^{k+1}
\rangle \right]\\
& + 2 \sum_{(n,m) \in \mathcal{E}} \mathbb{E}\left[\langle \bm{\lambda}_{n,m}^{k}-\bm{\lambda}_{n,m}^{\star}, \bm{\epsilon}_{n}^{k+1}-\bm{\epsilon}_{m}^{k+1} \rangle \right].
\end{align}
\normalsize
The third term can be expanded as
\small
\begin{align}\label{a3}
\nonumber & - 2 \rho \sum_{(n,m) \in \mathcal{E}} \mathbb{E}\left[\langle \bm{\epsilon}_m^{k+1}- \bm{\epsilon}_n^{k+1}, \bm{r}_{n,m}^{k+1}\rangle \right] \\
\nonumber & = - 2 \sum_{(n,m) \in \mathcal{E}} \mathbb{E}\left[\langle \bm{\epsilon}_m^{k+1}- \bm{\epsilon}_n^{k+1}, \bm{\lambda}_{n,m}^{k+1}-\bm{\lambda}_{n,m}^{\star}\rangle \right]\\
\nonumber & + 2 \sum_{(n,m) \in \mathcal{E}} \mathbb{E}\left[\langle \bm{\epsilon}_m^{k+1}- \bm{\epsilon}_n^{k+1}, \bm{\lambda}_{n,m}^{k}-\bm{\lambda}_{n,m}^{\star}\rangle \right]\\
& + 2 \rho \sum_{(n,m) \in \mathcal{E}} \mathbb{E}\left[\| \bm{\epsilon}_n^{k+1}- \bm{\epsilon}_m^{k+1} \|^2 \right].
\end{align}
\normalsize
From Eqs. \eqref{a1}-\eqref{a3}, we can re-write \eqref{lambdaeq} as
\small
\begin{align}\label{a}
\nonumber &2 \sum_{(n,m) \in \mathcal{E}} \mathbb{E}\left[\langle \bm{\lambda}_{n,m}^{\star}-\bm{\lambda}_{n,m}^{k+1}, \bm{r}_{n,m}^{k+1}\rangle \right] \\
\nonumber &=  \frac{1}{\rho} \sum_{(n,m) \in \mathcal{E}} \!\mathbb{E}\left[\|\bm{\lambda}_{n,m}^{k}\!-\!\bm{\lambda}_{n,m}^{\star}\|^2 \right]\!-\!\frac{1}{\rho} \sum_{(n,m) \in \mathcal{E}}\!\mathbb{E}\left[\|\bm{\lambda}_{n,m}^{k+1}\!-\!\bm{\lambda}_{n,m}^{\star}\|^2\right]\\
\nonumber& -\rho \sum_{(n,m) \in \mathcal{E}}\!\mathbb{E}\left[\|\bm{r}_{n,m}^{k+1}\|^2 \right]\!+\!2 \sum_{(n,m) \in \mathcal{E}}\!\mathbb{E}\left[\langle \bm{\epsilon}_m^{k+1}\!-\!\bm{\epsilon}_n^{k+1}, \bm{\lambda}_{n,m}^{k}\!-\!\bm{\lambda}_{n,m}^{\star}\rangle \right]\\
& + \rho \sum_{(n,m) \in \mathcal{E}} \mathbb{E}\left[\| \bm{\epsilon}_n^{k+1}- \bm{\epsilon}_m^{k+1} \|^2 \right].
\end{align}
\normalsize
The second term of the left hand-side of \eqref{c} can be decomposed as
\small
\begin{align}\label{*}
\nonumber & 2 \sum_{n\in \mathcal{H}} \mathbb{E}\left[\langle \bm{s}_{n}^{k+1}, \bm{\theta}_{n}^\star-\bm{\theta}_{n}^{k+1}\rangle\right] \\
\nonumber&= 2 \rho  \sum_{(n,m) \in \mathcal{E}} \mathbb{E}\left[ \langle \bm{\hat{\theta}}_{m}^{k+1}-\bm{\hat{\theta}}_{m}^{k}, \bm{\theta}_{n}^\star-\bm{\theta}_{m}^{k+1} \rangle \right]\\
& - 2 \rho  \sum_{(n,m) \in \mathcal{E}} \mathbb{E}\left[ \langle \bm{\hat{\theta}}_{m}^{k+1}-\bm{\hat{\theta}}_{m}^{k}, \bm{r}_{n,m}^{k+1} \rangle \right].
\end{align}
\normalsize
Now, we can re-write the first term as
\small
\begin{align}
\nonumber & - 2 \rho  \sum_{(n,m) \in \mathcal{E}}  \mathbb{E}\left[ \langle \bm{\hat{\theta}}_{m}^{k+1}-\bm{\hat{\theta}}_{m}^{k}, \bm{r}_{n,m}^{k+1} \rangle \right]\\
\nonumber &= - 2 \rho  \sum_{(n,m) \in \mathcal{E}}  \mathbb{E}\left[ \langle \bm{\theta}_{m}^{k+1}-\bm{\theta}_{m}^{k}, \bm{r}_{n,m}^{k+1} \rangle \right]\\
& + 2 \rho  \sum_{(n,m) \in \mathcal{E}} \mathbb{E}\left[  \langle \bm{\epsilon}_m^{k+1} - \bm{\epsilon}_m^{k} , \bm{r}_{n,m}^{k+1} \rangle \right].
\end{align}
\normalsize
The second term can be expanded as
\small
\begin{align}
\nonumber & 2 \rho  \sum_{(n,m) \in \mathcal{E}} \mathbb{E}\left[  \langle \bm{\hat{\theta}}_{m}^{k+1}-\bm{\hat{\theta}}_{m}^{k}, \bm{\theta}_{n}^\star-\bm{\theta}_{m}^{k+1} \rangle \right] \\
\nonumber& = 2 \rho  \sum_{(n,m) \in \mathcal{E}}  \mathbb{E}\left[ \langle \bm{\theta}_{m}^{k+1}-\bm{\theta}_{m}^{k}, \bm{\theta}_{n}^\star-\bm{\theta}_{m}^{k+1} \rangle \right]\\
& +  2 \rho  \sum_{(n,m) \in \mathcal{E}}  \mathbb{E}\left[ \langle \bm{\epsilon}_{m}^{k+1}-\bm{\epsilon}_{m}^{k}, \bm{\theta}_{m}^{k+1}-\bm{\theta}_{n}^\star \rangle \right].
\end{align}
\normalsize
Since $\bm{\theta}_n^{*} = \bm{\theta}_m^{*}$, $\forall (n,m) \in \mathcal{E}$ and $\bm{\theta}_{m}^\star-\bm{\theta}_{m}^{k+1} = \bm{\theta}_{m}^\star-\bm{\theta}_{m}^{k} + \bm{\theta}_{m}^{k}-\bm{\theta}_{m}^{k+1}$, we can write
\small
\begin{align}
\nonumber & 2 \rho  \sum_{(n,m) \in \mathcal{E}}  \mathbb{E}\left[\langle \bm{\theta}_{m}^{k+1}-\bm{\theta}_{m}^{k}, \bm{\theta}_{n}^\star-\bm{\theta}_{m}^{k+1} \rangle \right] \\
\nonumber &= -\rho \sum_{(n,m) \in \mathcal{E}}\!\mathbb{E}\left[\|\bm{\theta}_{m}^{k+1}\!-\!\bm{\theta}_{m}^{k}\|^2 \right]\!-\!\rho \sum_{(n,m) \in \mathcal{E}}\!\mathbb{E}\left[ \|\bm{\theta}_{m}^{k+1}\!-\!\bm{\theta}_{m}^{\star}\|^2 \right]\\
&  - \rho \sum_{(n,m) \in \mathcal{E}}\!\mathbb{E}\left[\|\bm{\theta}_{m}^{k}\!-\!\bm{\theta}_{m}^{\star}\|^2 \right]\!+\!2 \rho  \sum_{(n,m) \in \mathcal{E}}\!\mathbb{E}\left[\langle \bm{\theta}_{m}^{k+1}\!-\!\bm{\theta}_{m}^{\star}, \bm{\theta}_{m}^{k}\!-\!\bm{\theta}_{m}^{\star} \rangle \right].
\end{align}
\normalsize
With this expression at hand, we can go back to \eqref{*} 
\small
\begin{align}\label{b}
\nonumber & 2 \sum_{n\in \mathcal{H}} \mathbb{E}\left[\langle \bm{s}_{n}^{k+1}, \bm{\theta}_{n}^\star-\bm{\theta}_{n}^{k+1}\rangle \right] \\
\nonumber &= \rho  \sum_{(n,m) \in \mathcal{E}} \mathbb{E}\left[\|\bm{\theta}_{m}^{k}-\bm{\theta}_{m}^{\star}\|^2 \right] -  \rho  \sum_{(n,m) \in \mathcal{E}} \mathbb{E}\left[\|\bm{\theta}_{m}^{k+1}-\bm{\theta}_{m}^{\star}\|^2 \right]\\
\nonumber & + 2 \rho  \sum_{(n,m) \in \mathcal{E}} \mathbb{E}\left[ \langle \bm{\epsilon}_{m}^{k+1}-\bm{\epsilon}_{m}^{k}, \bm{\theta}_{m}^{k+1}-\bm{\theta}_{n}^\star \rangle \right] \\ 
\nonumber & -\!2 \rho  \sum_{(n,m) \in \mathcal{E}} \mathbb{E}\left[ \langle \bm{\theta}_{m}^{k+1}\!-\!\bm{\theta}_{m}^{k}, \bm{r}_{n,m}^{k+1} \rangle \right]\!+\!2 \rho  \sum_{(n,m) \in \mathcal{E}} \mathbb{E}\left[ \langle \bm{\epsilon}_m^{k+1}\!-\!\bm{\epsilon}_m^{k} , \bm{r}_{n,m}^{k+1} \rangle \right] \\
& \!-\!\rho  \sum_{(n,m) \in \mathcal{E}} \mathbb{E}\left[\|\bm{\theta}_{m}^{k+1}\!-\!\bm{\theta}_{m}^{k}\|^2 \right].
\end{align}
\normalsize
Replacing \eqref{a} and \eqref{b} in \eqref{c}, we obtain
\small
\begin{align}\label{eqq1}
\nonumber & \frac{1}{\rho}\!\sum_{(n,m) \in \mathcal{E}} \!\mathbb{E}\left[\|\bm{\lambda}_{n,m}^{k}\!-\!\bm{\lambda}_{n,m}^{\star}\|^2 \right]\!-\!\frac{1}{\rho} \sum_{(n,m) \in \mathcal{E}}\!\mathbb{E}\left[\|\bm{\lambda}_{n,m}^{k+1}\!-\!\bm{\lambda}_{n,m}^{\star}\|^2\right]\\
\nonumber & \!-\!\rho \sum_{(n,m) \in \mathcal{E}}\!\mathbb{E}\left[\|\bm{r}_{n,m}^{k+1}\|^2 \right]\!+\!2 \sum_{(n,m) \in \mathcal{E}}\!\mathbb{E}\left[\langle \bm{\epsilon}_m^{k+1}\!-\!\bm{\epsilon}_n^{k+1}, \bm{\lambda}_{n,m}^{k}\!-\!\bm{\lambda}_{n,m}^{\star}\rangle \right] \\
\nonumber & \!+\!\rho \sum_{(n,m) \in \mathcal{E}} \mathbb{E}\left[\| \bm{\epsilon}_n^{k+1}\!-\!\bm{\epsilon}_m^{k+1} \|^2 \right]\!+\!\rho  \sum_{(n,m) \in \mathcal{E}} \mathbb{E}\left[\|\bm{\theta}_{m}^{k}\!-\!\bm{\theta}_{m}^{\star}\|^2 \right] \\
\nonumber &\!-\!\rho  \sum_{(n,m) \in \mathcal{E}} \mathbb{E}\left[\|\bm{\theta}_{m}^{k+1}\!-\!\bm{\theta}_{m}^{\star}\|^2 \right]\!-\!\rho  \sum_{(n,m) \in \mathcal{E}} \mathbb{E}\left[\|\bm{\theta}_{m}^{k+1}\!-\!\bm{\theta}_{m}^{k}\|^2 \right] \\
\nonumber & \!-\!2 \rho  \sum_{(n,m) \in \mathcal{E}}  \mathbb{E}\left[\langle \bm{\theta}_{m}^{k+1}\!-\!\bm{\theta}_{m}^{k}, \bm{r}_{n,m}^{k+1} \rangle \right]\!+\!2 \rho  \sum_{(n,m) \in \mathcal{E}}  \mathbb{E}\left[\langle \bm{\epsilon}_m^{k+1}\!-\!\bm{\epsilon}_m^{k} , \bm{r}_{n,m}^{k+1} \rangle \right]  \\
&\!+\!2 \rho  \sum_{(n,m) \in \mathcal{E}} \mathbb{E}\left[ \langle \bm{\epsilon}_{m}^{k+1}\!-\!\bm{\epsilon}_{m}^{k}, \bm{\theta}_{m}^{k+1}\!-\!\bm{\theta}_{n}^\star \rangle \right]\!+\!2 \rho \sum_{n=1}^N \mathbb{E}\left[\langle d_{n} \bm{\epsilon}_n^{k+1} , \bm{\theta}_n^\star\!-\!\bm{\theta}_n^{k+1} \rangle \right]\!\geq\!0.
\end{align}
\normalsize
Using the identity $\bm{r}_{n,m}^{k+1} = \frac{1}{\rho}(\bm{\lambda}_{n,m}^{k+1}-\bm{\lambda}_{n,m}^{k}) + \bm{\epsilon}_n^{k+1}- \bm{\epsilon}_m^{k+1}$,
we can write
\small
\begin{align}
\nonumber& - \rho \sum_{(n,m) \in \mathcal{E}} \mathbb{E}\left[ \|\bm{r}_{n,m}^{k+1}\|^2 \right] \\
\nonumber& = - \frac{1}{\rho} \sum_{(n,m) \in \mathcal{E}}  \mathbb{E}\left[\|\bm{\lambda}_{n,m}^{k+1}-\bm{\lambda}_{n,m}^{k}\|^2 \right] - \rho \sum_{(n,m) \in \mathcal{E}}  \mathbb{E}\left[\|\bm{\epsilon}_{n}^{k+1}-\bm{\epsilon}_{m}^{k+1}\|^2 \right]\\
&+ 2 \sum_{(n,m) \in \mathcal{E}} \mathbb{E}\left[\langle \bm{\lambda}_{n,m}^{k+1}-\bm{\lambda}_{n,m}^{k}, \bm{\epsilon}_{m}^{k+1}-\bm{\epsilon}_{n}^{k+1} \rangle \right].
\end{align}
\normalsize
On the other hand, we have
\small
\begin{align}
\nonumber &\!2 \rho  \sum_{(n,m) \in \mathcal{E}}\!\mathbb{E}\left[\langle \bm{\epsilon}_m^{k+1}\!-\!\bm{\epsilon}_m^{k} , \bm{r}_{n,m}^{k+1} \rangle \right]\!+\!2 \rho \sum_{n=1}^N \mathbb{E}\left[\langle d_{n} \bm{\epsilon}_n^{k+1} , \bm{\theta}_n^\star\!-\!\bm{\theta}_n^{k+1} \rangle \right] \\ 
\nonumber &\!=\!2 \rho \sum_{(n,m) \in \mathcal{E}}\! \mathbb{E}\left[\langle \bm{\epsilon}_n^{k+1} , \bm{\theta}_m^\star\!-\!\bm{\theta}_m^{k+1} \rangle \right]\!+\!2 \rho \sum_{(n,m) \in \mathcal{E}} \mathbb{E}\left[\langle \bm{\epsilon}_m^{k+1} , \bm{\theta}_m^\star\!-\!\bm{\theta}_m^{k+1} \rangle \right]\\
\nonumber &\!-\!2 \rho \sum_{(n,m) \in \mathcal{E}} \mathbb{E}\left[\| \bm{\epsilon}_n^{k+1}\!-\!\bm{\epsilon}_m^{k+1} \|^2 \right]\!+\!2 \rho  \sum_{(n,m) \in \mathcal{E}}  \mathbb{E}\left[\langle \bm{\lambda}_{n,m}^{k+1}\!-\!\bm{\lambda}_{n,m}^{k}, \bm{\epsilon}_m^{k+1}\!-\!\bm{\epsilon}_n^{k+1} \rangle \right]\\
& \!-\!2 \rho  \sum_{(n,m) \in \mathcal{E}} \mathbb{E}\left[ \langle \bm{\epsilon}_m^{k} , \bm{r}_{n,m}^{k+1} \rangle \right].
\end{align}
\normalsize
Now, recall that $\bm{\theta}_{m}^{k+1}$, $m \in \mathcal{T}$ minimizes the function $f_m(\bm{\theta}_m) + \langle \bm{\alpha}_m^{k+1} + \rho d_{m} \bm{\epsilon}_m^{k+1}, \bm{\theta}_m \rangle$ and $\bm{\theta}_{m}^{k}$, $m \in \mathcal{T}$ minimizes the function $f_m(\bm{\theta}_m) + \langle \bm{\alpha}_m^{k} + \rho d_{m} \bm{\epsilon}_m^{k}, \bm{\theta}_m \rangle$, then we could write
\small
\begin{align}
\nonumber &\mathbb{E}\left[f_m(\bm{\theta}_m^{k+1}) \right]\!+\!\mathbb{E}\left[\langle \bm{\alpha}_m^{k+1}\!+\!\rho d_{m} \bm{\epsilon}_m^{k+1}, \bm{\theta}_m^{k+1} \rangle \right] \\
&\!\leq\! \mathbb{E}\left[f_m(\bm{\theta}_m^{k})\right]\!+\!\mathbb{E}\left[\langle \bm{\alpha}_m^{k+1}\!+\!\rho d_{m} \bm{\epsilon}_m^{k+1}, \bm{\theta}_m^{k} \rangle \right],
\\
\nonumber &\mathbb{E}\left[f_m(\bm{\theta}_m^{k})\right]\!+\!\mathbb{E}\left[\langle \bm{\alpha}_m^{k}\!+\!\rho d_{m} \bm{\epsilon}_m^{k}, \bm{\theta}_m^{k} \rangle \right]\\
 &\!\leq\!\mathbb{E}\left[f_m(\bm{\theta}_m^{k+1})\right]\!+\!\mathbb{E}\left[ \langle \bm{\alpha}_m^{k}\!+\!\rho d_{m} \bm{\epsilon}_m^{k}, \bm{\theta}_m^{k+1} \rangle \right].
\end{align}
\normalsize
Adding both equations and re-arranging the terms, we get
\small
\begin{align}\label{last}
\nonumber &\mathbb{E}\left[\langle \bm{\alpha}_m^{k+1} - \bm{\alpha}_m^{k}, \bm{\theta}_m^{k+1}-\bm{\theta}_m^{k} \rangle \right]\\ &\leq - \rho d_{m} \mathbb{E}\left[\langle \bm{\epsilon}_m^{k+1} - \bm{\epsilon}_m^{k} , \bm{\theta}_m^{k+1}-\bm{\theta}_m^{k} \rangle \right].
\end{align}
\normalsize
Using the update of $\bm{\alpha}_m^{k+1}$, i.e. $
\bm{\alpha}_m^{k+1} = \bm{\alpha}_m^{k} + \rho \sum_{n \in \mathcal{N}_m} \bm{r}_{m,n}^{k+1}$, we can re-write \eqref{last} to get
\small
\begin{align}
\nonumber &\!-\!\rho \sum_{(n,m) \in \mathcal{E}}\!\mathbb{E}\left[\langle \bm{r}_{n,m}^{k+1} ,\bm{\theta}_{m}^{k+1}\!-\!\bm{\theta}_{m}^{k} \rangle \right]\\
&\leq\!-\!\rho \sum_{(n,m) \in \mathcal{E}} \mathbb{E}\left[\langle \bm{\epsilon}_m^{k+1}\!-\!\bm{\epsilon}_m^{k} , \bm{\theta}_m^{k+1}\!-\!\bm{\theta}_m^{k} \rangle \right].
\end{align}
\normalsize
where we used $\bm{r}_{m,n}^{k+1} = -\bm{r}_{n,m}^{k+1}$ after summing over $m \in \mathcal{T}$. Going back to \eqref{eqq1}, we can write
\small
\begin{align}\label{eqq2}
\nonumber &\!\frac{1}{\rho} \sum_{(n,m) \in \mathcal{E}}\! \mathbb{E}\left[\|\bm{\lambda}_{n,m}^{k+1}\!-\!\bm{\lambda}_{n,m}^{\star}\|^2 \right]\!-\!\frac{1}{\rho}\sum_{(n,m) \in \mathcal{E}}\!\mathbb{E}\left[\|\bm{\lambda}_{n,m}^{k}\!-\!\bm{\lambda}_{n,m}^{\star}\|^2 \right]\\
\nonumber &\!+\!\frac{1}{\rho} \sum_{(n,m) \in \mathcal{E}}\!\mathbb{E}\left[\|\bm{\lambda}_{n,m}^{k+1}\!-\!\bm{\lambda}_{n,m}^{k}\|^2 \right]\!+\!\rho  \sum_{(n,m) \in \mathcal{E}} \mathbb{E}\left[\|\bm{\theta}_{m}^{k+1}\!-\!\bm{\theta}_{m}^{\star}\|^2 \right]\\
\nonumber &\!-\!\rho \sum_{(n,m) \in \mathcal{E}} \mathbb{E}\left[\|\bm{\theta}_{m}^{k}\!-\!\bm{\theta}_{m}^{\star}\|^2 \right]\!+\!\rho  \sum_{(n,m) \in \mathcal{E}} \mathbb{E}\left[\|\bm{\theta}_{m}^{k+1}\!-\!\bm{\theta}_{m}^{k}\|^2 \right]\\
\nonumber &\!\leq\!2 \sum_{(n,m) \in \mathcal{E}}\!\mathbb{E}\left[\langle \bm{\lambda}_{n,m}^{k+1}\!-\!\bm{\lambda}_{n,m}^{\star}, \bm{\epsilon}_m^{k+1}\!-\!\bm{\epsilon}_n^{k+1} \rangle \right]\\
\nonumber &\!+\!2 \sum_{(n,m) \in \mathcal{E}} \mathbb{E}\left[\langle \bm{\lambda}_{n,m}^{k+1}\!-\!\bm{\lambda}_{n,m}^{k}, \bm{\epsilon}_m^{k+1}\!-\!\bm{\epsilon}_n^{k+1} \rangle \right] \\
\nonumber &\!-\!2 \rho  \sum_{(n,m) \in \mathcal{E}}  \mathbb{E}\left[ \langle \bm{\theta}_{m}^{k+1}\!-\!\bm{\theta}_{m}^{k}, \bm{\epsilon}_{m}^{k+1}\!-\!\bm{\epsilon}_{m}^{k} \rangle \right]\!-\!2 \rho \sum_{(n,m) \in \mathcal{E}} \mathbb{E}\left[\langle \bm{\epsilon}_m^{k} , \bm{r}_{n,m}^{k+1} \rangle \right]\\
&\!+\!2 \rho \sum_{(n,m) \in \mathcal{E}} \mathbb{E}\left[\langle \bm{\epsilon}_{m}^{k}\!+\!\bm{\epsilon}_n^{k+1} , \bm{\theta}_m^\star\!-\!\bm{\theta}_m^{k+1} \rangle \right] .
\end{align}
\normalsize
To upper bound the terms in the right hand side, we will use the identity \eqref{id2}
\small
\begin{align}
\nonumber &2\!\sum_{(n,m) \in \mathcal{E}} \mathbb{E}\left[\langle \bm{\lambda}_{n,m}^{k+1}\!-\!\bm{\lambda}_{n,m}^{\star}, \bm{\epsilon}_m^{k+1}\!-\!\bm{\epsilon}_n^{k+1} \rangle \right]\\
&\leq\!\frac{1}{\eta_1}\!\sum_{(n,m) \in \mathcal{E}} \mathbb{E}\left[\|\bm{\epsilon}_m^{k+1}\!-\!\bm{\epsilon}_n^{k+1}\|^2 \right]\!+\!\eta_1\!\sum_{(n,m) \in \mathcal{E}} \mathbb{E}\left[\|\bm{\lambda}_{n,m}^{k+1}\!-\!\bm{\lambda}_{n,m}^{\star}\|^2 \right], \label{firstbound}\\
\nonumber &2\!\sum_{(n,m) \in \mathcal{E}} \mathbb{E}\left[\langle \bm{\lambda}_{n,m}^{k+1}\!-\!\bm{\lambda}_{n,m}^{k}, \bm{\epsilon}_m^{k+1}\!-\!\bm{\epsilon}_n^{k+1} \rangle \right]\\
&\leq\!\frac{1}{\eta_2}\!\sum_{(n,m) \in \mathcal{E}} \mathbb{E}\left[\|\bm{\epsilon}_m^{k+1}\!-\!\bm{\epsilon}_n^{k+1}\|^2 \right]\!+\!\eta_2\!\sum_{(n,m) \in \mathcal{E}} \mathbb{E}\left[\|\bm{\lambda}_{n,m}^{k+1}\!-\!\bm{\lambda}_{n,m}^{k}\|^2 \right], \label{secondtbound} \\
\nonumber &2 \rho \sum_{(n,m) \in \mathcal{E}} \mathbb{E}\left[\langle \bm{\epsilon}_m^{k}\!+\!\bm{\epsilon}_n^{k+1}, \bm{\theta}_m^{\star}\!-\!\bm{\theta}_m^{k+1} \rangle \right] \\
&\leq \frac{\rho}{\eta_3} \sum_{(n,m) \in \mathcal{E}} \mathbb{E}\left[\|\bm{\epsilon}_m^{k}\!+\!\bm{\epsilon}_n^{k+1}\|^2 \right]\!+\!\rho \eta_3 \sum_{(n,m) \in \mathcal{E}} \mathbb{E}\left[\|\bm{\theta}_m^{\star}\!-\!\bm{\theta}_m^{k+1}\|^2 \right], \label{thirdtbound}\\
\nonumber &-2 \rho \sum_{(n,m) \in \mathcal{E}} \mathbb{E}\left[\langle \bm{\theta}_m^{k+1}\!-\!\bm{\theta}_m^{k}, \bm{\epsilon}_m^{k+1}\!-\!\bm{\epsilon}_m^{k} \rangle \right] \\
&\leq \frac{\rho}{\eta_4} \sum_{(n,m) \in \mathcal{E}} \mathbb{E}\left[\|\bm{\epsilon}_m^{k+1}\!-\!\bm{\epsilon}_m^{k}\|^2 \right]\!+\!\rho \eta_4 \sum_{(n,m) \in \mathcal{E}} \mathbb{E}\left[\|\bm{\theta}_m^{k+1}\!-\!\bm{\theta}_m^{k}\|^2 \right], \label{fourthtbound}
\end{align}
\normalsize
Finally, we use both identities \eqref{id1} and \eqref{id2} to get the following bound
\small
\begin{align}\label{fifthtbound}
\nonumber &-2 \rho \sum_{(n,m) \in \mathcal{E}} \mathbb{E}\left[\langle \bm{\epsilon}_m^{k}, \bm{r}_{n,m}^{k+1} \rangle \right] \\
\nonumber&\!\leq\!\frac{\rho}{\eta_5} \sum_{(n,m) \in \mathcal{E}} \mathbb{E}\left[\|\bm{\epsilon}_m^{k}\|^2 \right] 
\!+\!\frac{2 \rho}{\eta_5} \sum_{(n,m) \in \mathcal{E}} \mathbb{E}\left[\|\bm{\lambda}_{n,m}^{k+1}\!-\!\bm{\lambda}_{n,m}^{k}\|^2 \right]\\
&+\!2 \rho \eta_5  \sum_{(n,m) \in \mathcal{E}} \mathbb{E}\left[\|\bm{\epsilon}_m^{k+1}\!-\!\bm{\epsilon}_n^{k+1}\|^2 \right],
\end{align}
\normalsize
where $\{\eta_i\}_{i=1}^5$ are are arbitrary positive constants to be specified later on. Using these bounds and re-arranging the terms in \eqref{eqq2}, we can write
\small
\begin{align}
\nonumber & 2 \rho \sum_{(n,m) \in \mathcal{E}} \mathbb{E}\left[\|\bm{\epsilon}_n^{k+1} - \bm{\epsilon}_m^{k+1}\|^2 \right] + \rho (1-\eta_4) \sum_{(n,m) \in \mathcal{E}} \mathbb{E}\left[\|\bm{\theta}_m^{k+1} - \bm{\theta}_m^{k}\|^2 \right] \\
\nonumber & + \left(\frac{1-2 \eta_5}{\rho} - \eta_2 \right) \sum_{(n,m) \in \mathcal{E}} \mathbb{E}\left[\|\bm{\lambda}_{n,m}^{k+1} - \bm{\lambda}_{n,m}^{k}\|^2 \right]\\
\nonumber & \leq \left(\frac{1}{\eta_1} + \frac{1}{\eta_2} + 2 \rho \eta_5 \right) \sum_{(n,m) \in \mathcal{E}} \mathbb{E}\left[\|\bm{\epsilon}_n^{k+1} - \bm{\epsilon}_m^{k+1}\|^2 \right]\\
\nonumber &  + \frac{\rho}{\eta_3} \sum_{(n,m) \in \mathcal{E}} \mathbb{E}\left[\|\bm{\epsilon}_n^{k+1} + \bm{\epsilon}_m^{k}\|^2 \right]+ \frac{1}{\rho} \sum_{(n,m) \in \mathcal{E}} \mathbb{E}\left[\|\bm{\lambda}_{n,m}^{k} - \bm{\lambda}_{n,m}^{\star}\|^2 \right]  \\
\nonumber &  - \left(\frac{1}{\rho} - \eta_1 \right) \sum_{(n,m) \in \mathcal{E}} \mathbb{E}\left[\|\bm{\lambda}_{n,m}^{k+1} - \bm{\lambda}_{n,m}^{\star}\|^2 \right] + \rho \sum_{(n,m) \in \mathcal{E}} \mathbb{E}\left[\|\bm{\theta}_m^{k} - \bm{\theta}_m^{\star}\|^2 \right] \\
\nonumber &\!-\!\rho (1-\eta_3) \sum_{(n,m) \in \mathcal{E}} \mathbb{E}\left[\|\bm{\theta}_m^{k+1}\!-\!\bm{\theta}_m^{\star}\|^2 \right]\!+\!\frac{\rho}{\eta_5} \sum_{(n,m) \in \mathcal{E}} \mathbb{E}\left[\|\bm{\epsilon}_m^{k}\|^2\right]\\
&\!+\!\frac{\rho}{\eta_4} \sum_{(n,m) \in \mathcal{E}} \mathbb{E}\left[\|\bm{\epsilon}_m^{k+1}\!-\!\bm{\epsilon}_m^{k}\|^2 \right].
\end{align}
\normalsize
Fix the values of $\{\eta_i\}_{i=1}^5$ to be $(\eta_1, \eta_2, \eta_3, \eta_4, \eta_5) = \left(\frac{\psi^k}{2 \psi^0 \rho}, \frac{1}{4 \rho}, \frac{\psi^k}{2 \psi^0}, \frac{1}{2}, \frac{1}{4} \right)$, we get
\small
\begin{align}
\nonumber & \frac{\rho}{2} \sum_{(n,m) \in \mathcal{E}} \mathbb{E}\left[\|\bm{\theta}_m^{k+1} - \bm{\theta}_m^{k}\|^2 \right] + \frac{1}{4 \rho} \sum_{(n,m) \in \mathcal{E}} \mathbb{E}\left[\|\bm{\lambda}_{n,m}^{k+1} - \bm{\lambda}_{n,m}^{k}\|^2 \right]\\
\nonumber & \leq \left(\frac{5 \rho}{2} + \frac{2 \rho \psi^0}{\psi^k} \right) \sum_{(n,m) \in \mathcal{E}} \mathbb{E}\left[\|\bm{\epsilon}_n^{k+1} - \bm{\epsilon}_m^{k+1}\|^2 \right] \\
\nonumber &\!+\!\frac{2 \rho \psi^0}{\psi^k} \sum_{(n,m) \in \mathcal{E}}\!\mathbb{E}\left[\|\bm{\epsilon}_n^{k+1}\!+\!\bm{\epsilon}_m^{k}\|^2 \right]\!+\!\frac{1}{\rho} \sum_{(n,m) \in \mathcal{E}}\!\mathbb{E}\left[\|\bm{\lambda}_{n,m}^{k}\!-\!\bm{\lambda}_{n,m}^{\star}\|^2 \right] \\
\nonumber &  - \frac{1}{\rho} \left(1 - \frac{\psi^k}{2 \psi^0}\right) \sum_{(n,m) \in \mathcal{E}} \mathbb{E}\left[\|\bm{\lambda}_{n,m}^{k+1} - \bm{\lambda}_{n,m}^{\star}\|^2\right]  \\
\nonumber&\!+\!\rho \sum_{(n,m) \in \mathcal{E}}\!\mathbb{E}\left[\|\bm{\theta}_m^{k}\!-\!\bm{\theta}_m^{\star}\|^2 \right]\!-\!\rho (1\!-\!\frac{\psi^k}{2 \psi^0}) \sum_{(n,m) \in \mathcal{E}}\!\mathbb{E}\left[\|\bm{\theta}_m^{k+1}\!-\!\bm{\theta}_m^{\star}\|^2 \right]\\
&\!+\!4 \rho \sum_{(n,m) \in \mathcal{E}}\!\mathbb{E}\left[\|\bm{\epsilon}_m^{k}\|^2 \right]\!+\!2 \rho \sum_{(n,m) \in \mathcal{E}}\! \mathbb{E}\left[\|\bm{\epsilon}_m^{k+1}\!-\!\bm{\epsilon}_m^{k}\|^2 \right]
\end{align}
\normalsize
Re-arranging the terms, we can write
\small
\begin{align}
\nonumber & \frac{\rho}{2} \sum_{(n,m) \in \mathcal{E}} \mathbb{E}\left[\|\bm{\theta}_m^{k+1} - \bm{\theta}_m^{k}\|^2 \right] + \frac{1}{4 \rho} \sum_{(n,m) \in \mathcal{E}} \mathbb{E}\left[\|\bm{\lambda}_{n,m}^{k+1} - \bm{\lambda}_{n,m}^{k}\|^2 \right]\\
\nonumber &\!\leq\!\frac{1}{\rho} \sum_{(n,m) \in \mathcal{E}}\!\mathbb{E}\left[\|\bm{\lambda}_{n,m}^{k}\!-\! \bm{\lambda}_{n,m}^{\star}\|^2 \right]\!-\!\frac{1}{\rho} \left(1\!-\!\frac{\psi^k}{2 \psi^0}\right)\!\mathbb{E}\left[\|\bm{\lambda}_{n,m}^{k+1}\!-\!\bm{\lambda}_{n,m}^{\star}\|^2 \right]\\
\nonumber &\!+\!\rho \sum_{(n,m) \in \mathcal{E}}\!\mathbb{E}\left[\|\bm{\theta}_m^{k}\!-\!\bm{\theta}_m^{\star}\|^2 \right]\!-\!\rho \left(1\!-\!\frac{\psi^k}{2 \psi^0}\right) \sum_{(n,m) \in \mathcal{E}} \mathbb{E}\left[\|\bm{\theta}_m^{k+1}\!-\!\bm{\theta}_m^{\star}\|^2 \right] \\
\nonumber &\!+\!\left(5 \rho\!+\!\frac{8 \rho \psi^0}{\psi^k} \right) \sum_{(n,m) \in \mathcal{E}}\!\mathbb{E}\left[\|\bm{\epsilon}_n^{k+1}\|^2 \right]\!+\!\left(9 \rho\!+\!\frac{4 \rho \psi^0}{\psi^k} \right) \sum_{(n,m) \in \mathcal{E}}\!\mathbb{E}\left[\|\bm{\epsilon}_m^{k+1}\|^2 \right] \\
&  + \left(8 \rho + \frac{4 \rho \psi^0}{\psi^k} \right) \sum_{(n,m) \in \mathcal{E}} \mathbb{E}\left[\|\bm{\epsilon}_m^{k}\|^2 \right].
\end{align}
\normalsize
Therefore, using \eqref{err_bound}, we can write
\small
\begin{align}
\nonumber & \frac{\rho}{2} \sum_{(n,m) \in \mathcal{E}} \mathbb{E}\left[\|\bm{\theta}_m^{k+1} - \bm{\theta}_m^{k}\|^2 \right] + \frac{1}{4 \rho} \sum_{(n,m) \in \mathcal{E}} \mathbb{E}\left[\|\bm{\lambda}_{n,m}^{k+1} - \bm{\lambda}_{n,m}^{k}\|^2 \right]\\
\nonumber & \leq \frac{1}{\rho} \sum_{(n,m) \in \mathcal{E}} \mathbb{E}\left[\|\bm{\lambda}_{n,m}^{k} - \bm{\lambda}_{n,m}^{\star}\|^2 \right]\\
\nonumber &\!-\!\frac{1}{\rho} \left(1\!-\!\frac{\psi^k}{2 \psi^0}\right)\!\mathbb{E}\left[\|\bm{\lambda}_{n,m}^{k+1}\!-\!\bm{\lambda}_{n,m}^{\star}\|^2 \right]\!+\!\rho \sum_{(n,m) \in \mathcal{E}} \mathbb{E}\left[\|\bm{\theta}_m^{k}\!-\! \bm{\theta}_m^{\star}\|^2 \right] \\
&\!-\!\rho \left(1\!-\!\frac{\psi^k}{2 \psi^0}\right) \sum_{(n,m) \in \mathcal{E}}\!\mathbb{E}\left[\|\bm{\theta}_m^{k+1}\!-\!\bm{\theta}_m^{\star}\|^2 \right]\!+\!\gamma_1 \psi^k\!+\!\gamma_2 \psi^{2k},
\end{align}
\normalsize
where $\gamma_1 = 64 \rho C_0 \psi^0 |\mathcal{E}|$ and $\gamma_2 = 88 \rho C_0^2 |\mathcal{E}|$. Now, we define the Lyapunov function 
\small
\begin{align}
V^k\!=\!\frac{1}{\rho} \sum_{(n,m) \in \mathcal{E}}\!\|\bm{\lambda}_{n,m}^{k}\!-\!\bm{\lambda}_{n,m}^{\star}\|^2\!+\! \rho \sum_{(n,m) \in \mathcal{E}}\!\|\bm{\theta}_m^{k}\!-\!\bm{\theta}_m^{\star}\|^2.
\end{align}
\normalsize
Thus, we get
\small
\begin{align}\label{eqly}
\nonumber &\frac{\rho}{2} \sum_{(n,m) \in \mathcal{E}}\!\mathbb{E}\left[\|\bm{\theta}_m^{k+1}\!-\!\bm{\theta}_m^{k}\|^2 \right]\!+\!\frac{1}{4 \rho} \sum_{(n,m) \in \mathcal{E}} \mathbb{E}\left[\|\bm{\lambda}_{n,m}^{k+1}\!-\!\bm{\lambda}_{n,m}^{k}\|^2 \right]\\
&\leq\!\mathbb{E}\left[V^k\right]\!-\!(1\!-\! \frac{\psi^k}{2 \psi^0}) \mathbb{E}\left[V^{k+1}\right]\!+\! \gamma_1 \psi^k\!+\!\gamma_2 \psi^{2k}.
\end{align}
\normalsize
As a consequence, we can write that $\mathbb{E}\left[V^{k+1}\right] \leq \left(1 - \frac{\psi^k}{2 \psi^0}\right)^{-1} \left(\mathbb{E}\left[V^k\right] + \gamma_1 \psi^k + \gamma_2 \psi^{2k} \right)$. Using this equation iteratively, we obtain
\small
\begin{align}
\nonumber &\mathbb{E}\left[V^{k+1}\right] \\
\nonumber & \leq \left(\prod_{j=0}^{k} \left(1\!-\!\frac{\psi^j}{2 \psi^0}\right)^{-1}\right) \mathbb{E}\left[V_0\right]\!+\!\gamma_1 \sum_{j=0}^{k} \prod_{i=j}^{k} \left(1\!-\!\frac{\psi^i}{2 \psi^0}\right)^{-1} \psi^j\!\\
\nonumber &+\!\gamma_2 \sum_{j=0}^{k} \prod_{i=j}^{k} \left(1\!-\!\frac{\psi^i}{2 \psi^0}\right)^{-1} \psi^{2j} \\
& \leq \prod_{j=0}^{\infty} \left(1\!-\!\frac{\psi^j}{2 \psi^0}\right)^{-1} \left( \mathbb{E}\left[V^0\right]\!+\!\gamma_1 \sum_{i=0}^{\infty} \psi^{i}\!+\!\gamma_2 \sum_{i=0}^{\infty} \psi^{2i} \right).
\end{align}
\normalsize
where we have used the fact that $\left(1 - \frac{\psi^k}{2 \psi^0}\right) \in [\frac{1}{2}, 1 ]$. Since $\sum_{i=0}^{\infty} \omega^{i} < \infty$ and $\sum_{i=0}^{\infty} \xi^{i} < \infty$, thus $\sum_{i=0}^{\infty} \psi^{i} < \infty$. Furthermore, the sequence $\{\psi^i\}$ is non-negative, then we get that $\sum_{i=0}^{\infty} \psi^{2i} < \infty$. To show that $\prod_{j=0}^{\infty} \left(1 - \frac{\psi^j}{2 \psi^0}\right)^{-1}$ is also finite, we consider its logarithm, i.e.
\small
\begin{align}
& \nonumber \log\left(\prod_{j=0}^{\infty} \left(1\!-\!\frac{\psi^j}{2 \psi^0}\right)^{-1}\right) \overset{\mathrm{(a)}}{\leq} \sum_{j=0}^{\infty}\log\left(\left(1\!-\!\frac{\psi^j}{2 \psi^0}\right)^{-1}\right) \\ & \overset{\mathrm{(b)}}{\leq} \sum_{j=0}^{\infty}\log\left(1\!+\!\frac{\psi^j}{\psi^0}\right)\leq \frac{1}{\psi^0}\sum_{j=0}^{\infty} \psi^j,
\end{align}
\normalsize
where we have used that $ \log\left(\left(1 - \frac{z}{2}\right)^{-1}\right) \leq \log(1+z),~z \geq 1$ in $\mathrm{(a)}$ and $\log(1+z) \leq z,~z \geq 1$ in $\mathrm{(b)}$. Hence, $\prod_{j=0}^{\infty} \left(1 - \frac{\psi^j}{2 \psi^0}\right)^{-1}$ is also finite and we conclude that the sequence $\mathbb{E}\left[V^k\right]$ is upper bounded by a finite quantity that we denote as $\bar{V}$. Going back to \eqref{eqly} and taking the sum from $k=0$ to $\infty$ while using the upper bound on $\mathbb{E}\left[V^k\right]$, we get
\small
\begin{align}
\nonumber & \sum_{k=0}^{\infty}  \sum_{(n,m) \in \mathcal{E}}\left\{\frac{\rho}{2} \mathbb{E}\left[\|\bm{\theta}_m^{k+1}\!-\!\bm{\theta}_m^{k}\|^2 \right] + \!\frac{1}{4 \rho}\!\mathbb{E}\left[\|\bm{\lambda}_{n,m}^{k+1}\!-\!\bm{\lambda}_{n,m}^{k}\|^2 \right]\right\} \\
&\leq V^0\!+\!(\frac{\bar{V}}{2 \psi^0}\!+\!\gamma_1) \sum_{k=0}^{\infty}  \psi^k\!+\!\gamma_2 \sum_{k=0}^{\infty} \psi^{2k}.
\end{align}
\normalsize
Since the right hand side is finite, we conclude that the left hand side is convergent and that
\small
\begin{align}
&\underset{k \rightarrow \infty}{\lim} \mathbb{E}\left[\|\bm{\theta}_m^{k+1} - \bm{\theta}_m^{k}\|^2 \right] = 0 , \label{cvtheta}\\
&\underset{k \rightarrow \infty}{\lim} \mathbb{E}\left[\|\bm{\lambda}_{n,m}^{k+1} - \bm{\lambda}_{n,m}^{k}\|^2 \right] = 0. \label{cvlamda}
\end{align}
\normalsize
Using the definition of the primal and dual residuals and \eqref{id1}, we can write
\small
\begin{align}
\nonumber &\mathbb{E}\left[\|\bm{r}_{n,m}^{k+1}\|^2 \right]\\
&\!\leq\!2 \left(\frac{1}{\rho^2} \mathbb{E}\left[\|\bm{\lambda}_{n,m}^{k+1}\!-\!\bm{\lambda}_{n,m}^{k}\|^2 \right]\!+\!2 \left( \mathbb{E}\left[\|\bm{\epsilon}_m^k \|^2 \right]\!+\!\mathbb{E}\left[\|\bm{\epsilon}_m^{k+1} \|^2 \right]\right) \right), \\
\nonumber &\mathbb{E}\left[\|\bm{s}_{n}^{k+1}\|^2 \right]\\ &\!\leq\!2 \rho^2 d_{n} \left(\sum_{m \in \mathcal{N}_n}\! \mathbb{E}\left[\|\bm{\theta}_{m}^{k+1}\!-\!\bm{\theta}_{m}^{k}\|^2 \right]\!+\!\sum_{m \in \mathcal{N}_n} \mathbb{E}\left[\|\bm{\epsilon}_{m}^{k}\!-\!\bm{\epsilon}_{m}^{k+1}\|^2 \right] \right).
\end{align}
\normalsize
Since $\mathbb{E}\left[\|\bm{\epsilon}_n^k \|^2 \right] \leq 4 C_0^2 \psi^{2k}, ~\forall n$, then $\underset{k \rightarrow \infty}{\lim} \mathbb{E}\left[\|\bm{\epsilon}_n^k \|^2 \right]  = 0$. Using \eqref{cvtheta}, \eqref{cvlamda} and $\underset{k \rightarrow \infty}{\lim} \mathbb{E}\left[\|\bm{\epsilon}_n^k \|^2 \right]  = 0$, we conclude that $\underset{k \rightarrow \infty}{\lim} \mathbb{E}\left[\|\bm{r}_{n,m}^{k+1}\|^2 \right] = 0$ and $\underset{k \rightarrow \infty}{\lim} \mathbb{E}\left[\|\bm{s}_{n}^{k+1}\|^2 \right] = 0$. 

\noindent Using the Cauchy–Schwarz inequality, we can write
\small
\begin{align}
& \left|\mathbb{E}\left[\langle \bm{\lambda}_{n,m}^{k+1}, \bm{r}_{n,m}^{k+1}\rangle \right]\right|^2 \leq  \mathbb{E}\left[\|\bm{\lambda}_{n,m}^{k+1}\|^2 \right]\mathbb{E}\left[\|\bm{r}_{n,m}^{k+1}\|^2 \right],\\ \label{sol1}
& \left|\mathbb{E}\left[\langle \bm{\lambda}_{n,m}^{\star}, \bm{r}_{n,m}^{k+1}\rangle \right]\right|^2 \leq  \mathbb{E}\left[\|\bm{\lambda}_{n,m}^{\star}\|^2 \right] \mathbb{E}\left[\|\bm{r}_{n,m}^{k+1}\|^2 \right],\\
& \left|\mathbb{E}\left[\langle \bm{s}_{n}^{k+1}, \bm{\theta}_{n}^\star\!-\!\bm{\theta}_{n}^{k+1}\rangle \right]\right|^2 \leq \mathbb{E}\left[\|\bm{s}_{n}^{k+1}\|^2 \right]\mathbb{E}\left[\|\bm{\theta}_{n}^\star\!-\!\bm{\theta}_{n}^{k+1}\|^2 \right],\\
& \left|\mathbb{E}\left[\langle \bm{\epsilon}_n^{k+1} , \bm{\theta}_n^\star - \bm{\theta}_n^{k+1} \rangle \right]\right|^2 \leq \mathbb{E}\left[\|\bm{\epsilon}_n^{k+1}\|^2 \right] \mathbb{E}\left[\|\bm{\theta}_n^\star - \bm{\theta}_n^{k+1}\|^2 \right]. \label{sol2}
\end{align}
\normalsize
Since $\underset{k \rightarrow \infty}{\lim} \mathbb{E}\left[\|\bm{\epsilon}_n^k \|^2 \right]  = 0$, $\underset{k \rightarrow \infty}{\lim} \mathbb{E}\left[\|\bm{r}_{n,m}^{k+1}\|^2 \right] = 0$ and $\underset{k \rightarrow \infty}{\lim} \mathbb{E}\left[\|\bm{s}_{n}^{k+1}\|^2 \right] = 0$, we get that
\small
\begin{align}
&\underset{k \rightarrow \infty}{\lim} \mathbb{E}\left[\langle \bm{\lambda}_{n,m}^{k+1}, \bm{r}_{n,m}^{k+1}\rangle \right] = 0, \\
&\underset{k \rightarrow \infty}{\lim} \mathbb{E}\left[\langle \bm{\lambda}_{n,m}^{\star}, \bm{r}_{n,m}^{k+1}\rangle \right] = 0, \\
&\underset{k \rightarrow \infty}{\lim} \mathbb{E}\left[\langle \bm{s}_{n}^{k+1}, \bm{\theta}_{n}^\star-\bm{\theta}_{n}^{k+1}\rangle \right] = 0, \\
&\underset{k \rightarrow \infty}{\lim} \mathbb{E}\left[\langle \bm{\epsilon}_n^{k+1} , \bm{\theta}_n^\star - \bm{\theta}_n^{k+1} \rangle \right] = 0.
\end{align}
\normalsize
From $(i)$ and $(ii)$ of Lemma \ref{lemma1}, we conclude that $\underset{k \rightarrow \infty}{\lim} \sum_{n=1}^N \mathbb{E}\left[f_n(\bm{\theta}_{n}^{k}) - f_n(\bm{\theta}_{n}^\star)\right] = 0$.
\subsection{Proof of Theorem \ref{thm2}}\label{proofthm2}
The proof of Theorem \ref{thm2} follows similar steps as the proof of convergence rate of \cite{liu2019}. However, the alternating update nature of our algorithm makes the updates happen in an asymmetric manner, in contrast to the symmetric update in \cite{liu2019}, which makes the proof more complex. For a bipartite graph, the adjacency matrix can be written as $\bm{A} = [\bm{0}_{rr} , \bm{B}; \bm{B}^T  , \bm{0}_{ss}]$ where $r = |\mathcal{H}|$, $s = |\mathcal{T}|$ are the cardinalities of $\mathcal{H}$ and $\mathcal{T}$, respectively. The matrices $\bm{0}_{rr}$, and $\bm{0}_{ss}$ are the null matrices of order $r\times r$, and $s\times s$, respectively. The matrix $\bm{B} \in \mathbb{R}^{r\times s}$ is called the bi-adjacency matrix. The adjacency matrix is a boolean matrix where each element is defined as $ \bm{A}_{i,j} = 1$ if there exists a link between the nodes $i$ and $j$ (i.e. workers), otherwise $ \bm{A}_{i,j} = 0$. In our analysis, we also define the matrix $\bm{C} = [\bm{0}_{rr} , \bm{B}; \bm{0}_{rs}  , \bm{0}_{ss}]$ as well as the matrices $
\bm{\theta} = [\bm{\theta}_1, \dots, \bm{\theta}_N]^T$, $ \bm{\alpha} = [\bm{\alpha}_1, \dots, \bm{\alpha}_N]^T$, $\bm{\hat{\theta}} = [\bm{\hat{\theta}}_1,\dots,\bm{\hat{\theta}}_N]^T$, and $\bm{E} = [\bm{\epsilon}_1, \dots, \bm{\epsilon}_N]^T$.
In this section, we introduce matrices related to the network topology: the diagonal degree matrix $\bm{D}$,  the signed and unsigned incidence matrices, $\bm{M}_{-}$ and $\bm{M}_{+}$, respectively. Using \eqref{alphaupdate}, \eqref{head_eq}, and \eqref{tail_eq}, we can write
\small
\begin{align}
& \nabla f(\bm{\theta}^{k+1}) + \bm{\alpha}^k - \rho \bm{C} \bm{\hat{\theta}}^{k} - \rho \bm{C}^T \bm{\hat{\theta}}^{k+1} + \rho \bm{D} \bm{\theta}^{k+1} = \bm{0}, \label{equa1} \\
&\bm{\alpha}^{k+1}  = \bm{\alpha}^k + \rho (\bm{D}-\bm{A}) \bm{\hat{\theta}}^{k+1}, \label{equa2}
\end{align}
\normalsize
and the optimality conditions are given by $
\nabla f(\bm{\theta}^\star) + \bm{\alpha}^\star = \bm{0}$, and $\bm{M}_{-}^T \bm{\theta}^\star  = \bm{0}$. Since we have $\bm{D}-\bm{A} = \frac{1}{2} \bm{M}_{-} \bm{M}_{-}^T$, then we can re-write \eqref{equa2} as $
\bm{\alpha}^{k+1}  = \bm{\alpha}^k + \frac{\rho}{2} \bm{M}_{-} \bm{M}_{-}^T \bm{\theta}^{k+1} + \frac{\rho}{2} \bm{M}_{-} \bm{M}_{-}^T \bm{E}^{k+1}$. Initializing $\bm{\alpha}^{0}$ in the column space of $\bm{M}_{-}$, we get that $\bm{\alpha}^{k}$ always stays in the column space of $\bm{M}_{-}$ and thus, we have $\bm{\alpha}^{k} =\bm{M}_{-} \bm{\beta}^k$, $\forall k \geq 0$. Thus, we can further write \eqref{equa2} as
\small
\begin{align}\label{eqbeta}
\bm{\beta}^{k+1}  = \bm{\beta}^k + \frac{\rho}{2} \bm{M}_{-}^T \bm{\theta}^{k+1} + \frac{\rho}{2} \bm{M}_{-}^T \bm{E}^{k+1}.
\end{align}
\normalsize
Using $\bm{D}\!=\!\frac{1}{4} \bm{M}_{-} \bm{M}_{-}^T\!+\! \frac{1}{4} \bm{M}_{+} \bm{M}_{+}^T$, $\bm{A}\!=\!\frac{1}{4} \bm{M}_{+} \bm{M}_{+}^T\!-\!\frac{1}{4} \bm{M}_{-} \bm{M}_{-}^T$ and \eqref{eqbeta}, we can write \eqref{equa1} as
\small
\begin{align}
\nonumber &\nabla\!f(\bm{\theta}^{k+1})\!+\!\bm{M}_{-} \bm{\beta}^{k+1}\!+\!\rho \bm{C} (\bm{E}^{k}\!-\!\bm{\theta}^{k}) + \rho (\bm{C}^T\!-\!\frac{1}{2} \bm{M}_{-} \bm{M}_{-}^T) \bm{E}^{k+1}\\
&+\!\rho\!\left(\bm{A}\!-\!\bm{C}^T\right)\!\bm{\theta}^{k+1}\!=\!\bm{0}.
\end{align}
\normalsize
Using that $\nabla f(\bm{\theta}^\star) + \bm{M}_{-} \bm{\beta}^\star = \bm{0}$ and $\bm{A} = \bm{C} + \bm{C}^T$, we can write
\small
\begin{align}\label{eqbet}
\nonumber& \nabla\!f(\bm{\theta}^{k+1})\!-\!\nabla\!f(\bm{\theta}^\star)\\
\nonumber&=\!\bm{M}_{-} (\bm{\beta}^{\star}\!-\!\bm{\beta}^{k+1}) + \rho \bm{C} (\bm{\theta}^{k}\!-\!\bm{\theta}^{k+1})\!-\!\rho \bm{C} \bm{E}^{k}\\
&+\!\rho \left(\frac{1}{2} \bm{M}_{-} \bm{M}_{-}^T\!-\!\bm{C}^T\right) \bm{E}^{k+1},
\end{align}
\normalsize
then, multiplying both sides by $\bm{\theta}^{k+1}-\bm{\theta}^\star$, we get
\small
\begin{align}\label{eqq_conv}
\nonumber & \mathbb{E} \left[\langle \nabla f(\bm{\theta}^{k+1}) - \nabla f(\bm{\theta}^\star), \bm{\theta}^{k+1}-\bm{\theta}^\star \rangle \right]\\
\nonumber &= \mathbb{E} \left[\langle \bm{M}_{-} (\bm{\beta}^{\star}-\bm{\beta}^{k+1}),  \bm{\theta}^{k+1}-\bm{\theta}^\star \rangle \right]\\
\nonumber & + \rho \mathbb{E} \left[\langle \bm{C} \left(\bm{\theta}^{k} - \bm{\theta}^{k+1}\right), \bm{\theta}^{k+1}-\bm{\theta}^\star \rangle \right]  - \rho \mathbb{E} \left[\langle \bm{C} \bm{E}^{k} , \bm{\theta}^{k+1}-\bm{\theta}^\star \rangle \right]\\
&\!-\!\rho \mathbb{E}\left[\langle \bm{C}^T  \bm{E}^{k+1} , \bm{\theta}^{k+1}\!-\!\bm{\theta}^\star \rangle \right]\!+\! \frac{\rho}{2} \mathbb{E} \left[\langle \bm{M}_{-} \bm{M}_{-}^T \bm{E}^{k+1} , \bm{\theta}^{k+1}\!-\!\bm{\theta}^\star \rangle \right].
\end{align}
\normalsize
The first term of the right hand side can be re-written as
\small
\begin{align}\label{eqq_a}
\nonumber &\mathbb{E}\left[\langle \bm{M}_{-} (\bm{\beta}^{\star}-\bm{\beta}^{k+1}),  \bm{\theta}^{k+1}-\bm{\theta}^\star \rangle \right] \overset{\mathrm{(a)}}{=} \mathbb{E}\left[\langle \bm{\beta}^{\star}-\bm{\beta}^{k+1}, \bm{M}_{-}^T \bm{\theta}^{k+1} \rangle \right] \\
&\overset{\mathrm{(b)}}{=} -\frac{2}{\rho} \mathbb{E}\left[\langle \bm{\beta}^{k+1}\!-\!\bm{\beta}^{\star}, \bm{\beta}^{k+1}\!-\!\bm{\beta}^{k} \rangle \right]\!-\!\mathbb{E}\left[\langle \bm{\beta}^{\star}\!-\!\bm{\beta}^{k+1}, \bm{M}_{-}^T \bm{E}^{k+1} \rangle \right],
\end{align}
\normalsize
where we have used $\bm{M}_{-}^T \bm{\theta}^\star = \bm{0}$ in $\mathrm{(a)}$ and $\bm{M}_{-}^T \bm{\theta}^{k+1} = \frac{2}{\rho} \left( \bm{\beta}^{k+1}-\bm{\beta}^{k+1} \right) - \bm{M}_{-}^T \bm{E}^{k+1}$ in $\mathrm{(b)}$.\\
Expanding the first term of \eqref{eqq_a}, we can write
\small
\begin{align}\label{eqq_b}
\nonumber& -\frac{2 \mathbb{E}\left[\langle \bm{\beta}^{k+1}\!-\!\bm{\beta}^{\star}, \bm{\beta}^{k+1}\!-\!\bm{\beta}^{k} \rangle \right]}{\rho} \\
&=\!\frac{1}{\rho}(\mathbb{E}\left[\|\bm{\beta}^{k}\!-\!\bm{\beta}^{\star}\|_F^2 \right] \!-\!\mathbb{E}\left[\|\bm{\beta}^{k+1}\!-\!\bm{\beta}^{\star}\|_F^2 \right]\!-\!\mathbb{E}\left[\|\bm{\beta}^{k+1}\!-\!\bm{\beta}^{k}\|_F^2 \right]).
\end{align}
\normalsize
Replacing the terms derived in \eqref{eqq_a} and \eqref{eqq_b} by their expressions in \eqref{eqq_conv}, we obtain
\small
\begin{align}\label{eqq_conv_2}
\nonumber & \mathbb{E}\left[\langle \nabla f(\bm{\theta}^{k+1}) - \nabla f(\bm{\theta}^\star), \bm{\theta}^{k+1}-\bm{\theta}^\star \rangle \right]\\
\nonumber &= \frac{1}{\rho} \mathbb{E}\left[\|\bm{\beta}^{k}-\bm{\beta}^{\star}\|_F^2\right] - \frac{1}{\rho} \mathbb{E}\left[\|\bm{\beta}^{k+1}-\bm{\beta}^{\star}\|_F^2\right]\\
\nonumber& - \frac{1}{\rho} \mathbb{E}\left[\|\bm{\beta}^{k+1}-\bm{\beta}^{k}\|_F^2\right] + \mathbb{E}\left[\langle \bm{\beta}^{k+1}-\bm{\beta}^{\star}, \bm{M}_{-}^T \bm{E}^{k+1} \rangle \right]\\
\nonumber &  - \rho \mathbb{E}\left[\langle \bm{C} \bm{E}^{k} , \bm{\theta}^{k+1}-\bm{\theta}^\star \rangle \right] - \rho \mathbb{E}\left[\langle \bm{C}^T  \bm{E}^{k+1} , \bm{\theta}^{k+1}-\bm{\theta}^\star \rangle \right] \\
\nonumber & + \frac{\rho}{2} \mathbb{E}\left[\langle \bm{M}_{-} \bm{M}_{-}^T \bm{E}^{k+1} , \bm{\theta}^{k+1}-\bm{\theta}^\star \rangle \right]\\
&- \rho \mathbb{E}\left[\langle \bm{C} \left(\bm{\theta}^{k+1}- \bm{\theta}^{k}\right), \bm{\theta}^{k+1}-\bm{\theta}^\star \rangle \right].
\end{align}
\normalsize
Using the strong convexity of the function $f$, we can lower bound the left hand side of \eqref{eqq_conv}
\small
\begin{align}
\mathbb{E}\left[\langle \nabla f(\bm{\theta}^{k+1}) - \nabla f(\bm{\theta}^\star), \bm{\theta}^{k+1}-\bm{\theta}^\star \rangle \right] \geq \mu \mathbb{E}\left[\|\bm{\theta}^{k+1}-\bm{\theta}^\star\|_F^2 \right].
\end{align}
\normalsize
Hence, we can write
\small
\begin{align}\label{eqbound}
\nonumber & \frac{1}{\rho} \mathbb{E}\left[\|\bm{\beta}^{k+1}-\bm{\beta}^{\star}\|_F^2 + \mu \|\bm{\theta}^{k+1}-\bm{\theta}^\star\|_F^2 \right]\\
\nonumber& \leq \frac{1}{\rho} \mathbb{E}\left[\|\bm{\beta}^{k}-\bm{\beta}^{\star}\|_F^2 \right] + \rho \mathbb{E}\left[\langle \bm{C} \left(\bm{\theta}^{k}- \bm{\theta}^{\star}\right), \bm{\theta}^{k+1}-\bm{\theta}^\star \rangle \right] \\
\nonumber& +  \rho \mathbb{E}\left[\langle \bm{C} \left(\bm{\theta}^{\star}- \bm{\theta}^{k+1}\right), \bm{\theta}^{k+1}-\bm{\theta}^\star \rangle \right] - \frac{1}{\rho} \mathbb{E}\left[\|\bm{\beta}^{k+1}-\bm{\beta}^{k}\|_F^2 \right]\\
\nonumber& + \mathbb{E}\left[\langle \bm{\beta}^{k+1}-\bm{\beta}^{\star}, \bm{M}_{-}^T \bm{E}^{k+1} \rangle \right] - \rho \mathbb{E}\left[\langle \bm{C} \bm{E}^{k} , \bm{\theta}^{k+1}-\bm{\theta}^\star \rangle \right] \\
& - \rho \mathbb{E}\left[\langle \bm{C}^T  \bm{E}^{k+1} , \bm{\theta}^{k+1}-\bm{\theta}^\star \rangle \right] + \frac{\rho}{2} \mathbb{E}\left[\langle \bm{M}_{-} \bm{M}_{-}^T \bm{E}^{k+1} , \bm{\theta}^{k+1}-\bm{\theta}^\star \rangle \right].
\end{align}
\normalsize
Now, using identities \eqref{id4} and \eqref{id5}, we get the following bounds
\small
\begin{align}
\nonumber&\mathbb{E}\left[\langle \bm{C} \left(\bm{\theta}^{\star}- \bm{\theta}^{k+1}\right), \bm{\theta}^{k+1}-\bm{\theta}^\star \rangle \right] \\
&\leq \left(\frac{\eta_0}{2}\sigma_{\max}^2(\bm{C}) + \frac{1}{2 \eta_0}\right) \mathbb{E}\left[\|\bm{\theta}^{k+1}- \bm{\theta}^{\star}\|_F^2 \right], \label{eq_first}\\
\nonumber&\mathbb{E}\left[\langle \bm{C} \left(\bm{\theta}^{k}\!- \bm{\theta}^{\star}\right), \bm{\theta}^{k+1}\!-\!\bm{\theta}^\star \rangle \right] \\
&\leq \frac{\eta_1 \sigma_{\max}^2(\bm{C})}{2}  \mathbb{E}\left[\|\bm{\theta}^{k}\!-\! \bm{\theta}^{\star}\|_F^2 \right]\!+\!\frac{1}{2 \eta_1} \mathbb{E}\left[\|\bm{\theta}^{k+1}\!-\!\bm{\theta}^{\star}\|_F^2 \right], \\
\nonumber&\mathbb{E}\left[\langle \bm{\beta}^{k+1}-\bm{\beta}^{\star}, \bm{M}_{-}^T \bm{E}^{k+1} \rangle \right] \\
&\leq \frac{\eta_2}{2} \mathbb{E}\left[\|\bm{\beta}^{k+1}-\bm{\beta}^{\star}\|_F^2 \right] + \frac{\sigma_{\max}^2(\bm{M}_{-})}{2 \eta_2} \mathbb{E}\left[\|\bm{E}^{k+1}\|_F^2 \right], \\
\nonumber&\mathbb{E}\left[\langle \bm{C} \bm{E}^{k} , \bm{\theta}^\star-\bm{\theta}^{k+1} \rangle \right] \\
&\leq \frac{\eta_3}{2} \mathbb{E}\left[\|\bm{\theta}^{k+1}-\bm{\theta}^\star\|_F^2 \right] + \frac{\sigma_{\max}^2(\bm{C})}{2 \eta_3} \mathbb{E}\left[\|\bm{E}^{k}\|_F^2 \right], \\
\nonumber&\mathbb{E}\left[\langle \bm{C}^T  \bm{E}^{k+1} , \bm{\theta}^\star-\bm{\theta}^{k+1} \rangle \right] \\
&\leq \frac{\eta_4}{2} \mathbb{E}\left[\|\bm{\theta}^{k+1}-\bm{\theta}^\star\|_F^2 \right] + \frac{\sigma_{\max}^2(\bm{C})}{2 \eta_4} \mathbb{E}\left[\|\bm{E}^{k+1}\|_F^2 \right], \\
\nonumber&\mathbb{E}\left[\langle \bm{M}_{-} \bm{M}_{-}^T \bm{E}^{k+1} , \bm{\theta}^{k+1}-\bm{\theta}^\star \rangle \right] \\
&\leq \frac{\eta_5}{2} \mathbb{E}\left[\|\bm{\theta}^{k+1}-\bm{\theta}^\star\|_F^2 \right] + \frac{\sigma_{\max}^4(\bm{M}_{-})}{2 \eta_5} \mathbb{E}\left[\|\bm{E}^{k+1}\|_F^2 \right], \label{eq_last}
\end{align}
\normalsize
Replacing the bounds derived in \eqref{eq_first}-\eqref{eq_last} in \eqref{eqbound} and introducing $\kappa > 0$, we get
\small
\begin{align}
\nonumber & (1+\kappa) \frac{1}{\rho} \mathbb{E}\left[\|\bm{\beta}^{k+1}-\bm{\beta}^{\star}\|_F^2\right] +  \mu  \mathbb{E}\left[\|\bm{\theta}^{k+1}-\bm{\theta}^\star\|_F^2\right]\\
\nonumber& \leq \frac{1}{\rho} \mathbb{E}\left[\|\bm{\beta}^{k}-\bm{\beta}^{\star}\|_F^2\right] + \frac{\rho \sigma_{\max}^2(\bm{C})}{2} \eta_1  \mathbb{E}\left[\|\bm{\theta}^{k}-\bm{\theta}^\star\|_F^2\right]\\
\nonumber& + \rho \left(\frac{1}{2 \eta_1} + \frac{\eta_3}{2} + \frac{\eta_4}{2} + \frac{\eta_5}{4}\right)   \mathbb{E}\left[\|\bm{\theta}^{k+1}-\bm{\theta}^\star\|_F^2 \right]\\
\nonumber & + \left(\frac{\eta_2}{2} + \frac{\kappa}{\rho}\right) \mathbb{E}\left[\|\bm{\beta}^{k+1}-\bm{\beta}^{\star}\|_F^2\right]  \\
\nonumber&+\!\rho\!\left(\frac{\eta_0 \sigma_{\max}^2(\bm{C})}{2}\!+\!\frac{1}{2 \eta_0}\right)\!\mathbb{E}\left[\|\bm{\theta}^{k+1}\!-\!\bm{\theta}^{\star}\|_F^2\right]\!+\!\frac{\rho \sigma_{\max}^2(\bm{C})}{2 \eta_3}\!\mathbb{E}\left[\|\bm{E}^{k}\|_F^2\right]\\
&\!+\!\left( \frac{\sigma_{\max}^2(\bm{M}_{-})}{2 \eta_2}\!+\!\frac{\rho \sigma_{\max}^2(\bm{C})}{2 \eta_4} \!+\!\frac{\rho \sigma_{\max}^4(\bm{M}_{-})}{4 \eta_5} \right) \mathbb{E}\left[\|\bm{E}^{k+1}\|_F^2\right].
\end{align}
\normalsize
Using that $\|\bm{E}^{k+1}\|_F^2 \leq \|\bm{E}^{k}\|_F^2$, and re-arranging the terms, we can further write
\small
\begin{align}\label{eqb_01}
\nonumber& \frac{1}{\rho} \mathbb{E}\left[\|\bm{\beta}^{k}-\bm{\beta}^{\star}\|_F^2\right] - \frac{1+\kappa}{\rho} \mathbb{E}\left[\|\bm{\beta}^{k+1}-\bm{\beta}^{\star}\|_F^2\right]\\
\nonumber& + \frac{\rho \eta_1 \sigma_{\max}^2(\bm{C}) }{2} \mathbb{E}\left[\|\bm{\theta}^{k}-\bm{\theta}^\star\|_F^2\right] + \left(\frac{\eta_2}{2} + \frac{\kappa}{\rho}\right) \mathbb{E}\left[\|\bm{\beta}^{k+1}-\bm{\beta}^{\star}\|_F^2\right]\\
\nonumber&\!-\!\left[\mu\!-\!\left(\frac{\eta_0 \sigma_{\max}^2(\bm{C})}{2}\!+\!\frac{1}{2 \eta_0}\!+\!\frac{1}{2 \eta_1}\!+\!\frac{\eta_3}{2}\!+\!\frac{\eta_4}{2}\!+\! \frac{\eta_5}{4}\right) \rho \right]\\
&\times \mathbb{E}\left[\|\bm{\theta}^{k+1}\!-\!\bm{\theta}^\star\|_F^2\right] + \gamma \mathbb{E}\left[\|\bm{E}^{k}\|_F^2\right] \geq 0,
\end{align}
\normalsize
where $\gamma = \frac{\sigma_{\max}^2(\bm{M}_{-})}{2 \eta_2} + \frac{\rho}{2} \left(\frac{1}{\eta_4} + \frac{1}{\eta_3} \right)  \sigma_{\max}^2(\bm{C}) + \frac{\rho}{4 \eta_5} \sigma_{\max}^4(\bm{M}_{-}) > 0$. Fixing $\eta_2 = \frac{2 \kappa}{\rho}$, we get
\small
\begin{align}\label{eqb_1}
\nonumber& \frac{1}{\rho} \mathbb{E}\left[\|\bm{\beta}^{k}-\bm{\beta}^{\star}\|_F^2\right] - (1+\kappa)\frac{1}{\rho} \mathbb{E}\left[\|\bm{\beta}^{k+1}-\bm{\beta}^{\star}\|_F^2\right]\\
\nonumber& + \frac{\rho}{2} \eta_1 \sigma_{\max}^2(\bm{C}) \mathbb{E}\left[\|\bm{\theta}^{k}-\bm{\theta}^\star\|_F^2\right] + \frac{2 \kappa}{\rho} \mathbb{E}\left[\|\bm{\beta}^{k+1}-\bm{\beta}^{\star}\|_F^2\right]\\
\nonumber&\!-\!\left[\!\mu\!-\!\left(\frac{\eta_0\sigma_{\max}^2(\bm{C})}{2}\!+\!\frac{1}{2 \eta_0}\!+\!\frac{1}{2 \eta_1}\!+\! \frac{\eta_3}{2}\!+\!\frac{\eta_4}{2}\!+\!\frac{\eta_5}{4}\right) \rho \right]\!\\
& \times \mathbb{E}\left[\|\bm{\theta}^{k+1}\!-\!\bm{\theta}^\star\|_F^2\right] + \gamma \mathbb{E}\left[\|\bm{E}^{k}\|_F^2\right] \geq 0.
\end{align}
\normalsize
In order to bound the term $\mathbb{E}\left[\|\bm{\beta}^{k+1}-\bm{\beta}^\star\|_F^2\right]$ in the left hand side, we use \eqref{eqbet} to write
\small
\begin{align}
\nonumber & \mathbb{E}\left[\|\bm{M}_{-} (\bm{\beta}^{\star}-\bm{\beta}^{k+1})\|_F^2\right]\\
\nonumber &= \mathbb{E}\left[\|\nabla\!f(\bm{\theta}^{k+1})\!-\!\nabla f(\bm{\theta}^\star)\!+\!\rho \bm{C} \left(\bm{\theta}^{k+1}\!-\!\bm{\theta}^{k}\right)\right.\\
&+\left.\!\rho \bm{C} \bm{E}^{k}\!+\!\rho (\bm{C}^T\!-\!\frac{1}{2} \bm{M}_{-} \bm{M}_{-}^T) \bm{E}^{k+1}\|_F^2\right].
\end{align}
\normalsize
Using identity \eqref{id1}, we can further write
\small
\begin{align}
\nonumber& \mathbb{E}\left[\|\bm{M}_{-} (\bm{\beta}^{\star}-\bm{\beta}^{k+1})\|_F^2\right] \\
\nonumber &\leq 2 \mathbb{E}\left[\|\nabla f(\bm{\theta}^{k+1}) - \nabla f(\bm{\theta}^\star) + \rho \bm{C} \left(\bm{\theta}^{k+1}-\bm{\theta}^{k}\right)\|_F^2\right]\\
& + 2 \mathbb{E}\left[\|\rho \bm{C} \bm{E}^{k} + \rho \left(\bm{C}^T-\frac{1}{2} \bm{M}_{-} \bm{M}_{-}^T \right) \bm{E}^{k+1}\|_F^2\right].
\end{align}
\normalsize
Using \eqref{id7} for the first term and \eqref{id1} for the second term of the right hand side, we get 
\small
\begin{align}
\nonumber &\mathbb{E}\left[\|\bm{M}_{-} (\bm{\beta}^{k+1}-\bm{\beta}^{\star})\|_F^2\right] \\
\nonumber&\!\leq\!2 \eta \mathbb{E}\left[\|\nabla f(\bm{\theta}^{k+1})\!-\!\nabla f(\bm{\theta}^\star) \|_F^2\right]\!+\!\frac{2 \eta}{\eta-1} \mathbb{E}\left[\|\rho \bm{C} \left(\bm{\theta}^{k+1}\!-\!\bm{\theta}^{k}\right)\|_F^2\right] \\
&+ 4 \mathbb{E}\left[\|\rho \bm{C} \bm{E}^{k}\|_F^2\right] + 4 \mathbb{E}\left[\|\rho \left(\bm{C}^T-\frac{1}{2} \bm{M}_{-} \bm{M}_{-}^T \right) \bm{E}^{k+1}\|_F^2\right].
\end{align}
\normalsize
Now, we can write $\mathbb{E}\left[\|\bm{M}_{-} (\bm{\beta}^{k+1}-\bm{\beta}^{\star})\|_F^2\right] \geq \tilde{\sigma}_{\min}^2(\bm{M}_{-})~\mathbb{E}\left[\|\bm{\beta}^{k+1}-\bm{\beta}^{\star}\|_F^2\right]$ where $\tilde{\sigma}_{\min}(\bm{M}_{-})$ is the minimum non-zero singular value of $\bm{M}_{-}$ since both $\bm{\beta}^{k+1}$ and $\bm{\beta}^{\star}$ belong to the columns space of $\bm{M}_{-}$ and from Assumption 5, we have $\mathbb{E}\left[\|\nabla f(\bm{\theta}^{k+1})\!-\!\nabla f(\bm{\theta}^\star) \|_F\right] \leq L~\mathbb{E}\left[\|\bm{\theta}^{k+1}\!-\!\bm{\theta}^\star \|_F\right]$. Therefore, we get
\small
\begin{align}\label{eqb_2}
\nonumber &\mathbb{E}\left[\|\bm{\beta}^{k+1}-\bm{\beta}^{\star}\|_F^2\right] \\
\nonumber &\leq \frac{2 \eta}{\tilde{\sigma}_{\min}^2(\bm{M}_{-})} \left(L^2 + \frac{2 \rho^2}{\eta-1} \sigma_{\max}^2(\bm{C})\right) \mathbb{E}\left[\|\bm{\theta}^{k+1} - \bm{\theta}^\star \|_F^2\right]\\
\nonumber & + \frac{4 \eta \rho^2 \sigma_{\max}^2(\bm{C})}{(\eta-1) \tilde{\sigma}_{\min}^2(\bm{M}_{-})}  \mathbb{E}\left[\|\bm{\theta}^{k}-\bm{\theta}^{\star}\|_F^2\right] \\
&+ \frac{16 N \rho^2}{\tilde{\sigma}_{\min}^2(\bm{M}_{-})} \left( \sigma_{\max}^2(\bm{C}) + \sigma_{\max}^2\left(\bm{C}^T-\frac{1}{2} \bm{M}_{-} \bm{M}_{-}^T \right) \right) \psi^{2k},
\end{align}
\normalsize
where we have used that $\mathbb{E}\left[\|\bm{E}^{k+1}\|_F^2\right] \leq \mathbb{E}\left[\|\bm{E}^{k}\|_F^2\right] \leq 4 C_0^2 N \psi^{2k}$. Plugging the bound obtained from \eqref{eqb_2} in \eqref{eqb_1} we get
\small
\begin{align}\label{eq_beffinal}
\nonumber& \frac{1}{\rho} \mathbb{E}\left[\|\bm{\beta}^{k}-\bm{\beta}^{\star}\|_F^2\right] - (1+\kappa) \frac{1}{\rho} \mathbb{E}\left[\|\bm{\beta}^{k+1}-\bm{\beta}^{\star}\|_F^2\right]\\
\nonumber&  + \left(b_1  + a \kappa \right) \rho \mathbb{E}\left[\|\bm{\theta}^{k}-\bm{\theta}^\star\|_F^2\right] \\
& - \left(\mu - \frac{c \kappa}{\rho} - (b_2 + a \kappa) \rho \right) \mathbb{E}\left[\|\bm{\theta}^{k+1}-\bm{\theta}^\star\|_F^2 \right] + \nu \psi^{2k} \geq 0,
\end{align}
\normalsize
where $b_1 = \frac{\eta_1 \sigma_{\max}^2(\bm{C})}{2} $, $b_2 = \frac{\eta_0}{2}\sigma_{\max}^2(\bm{C}) + \frac{1}{2 \eta_0} + \frac{1}{2 \eta_1} + \frac{\eta_3}{2} + \frac{\eta_4}{2} + \frac{\eta_5}{4}$, $c = \frac{4 \eta  L^2}{\tilde{\sigma}_{\min}^2(\bm{M}_{-})}$, $a = \frac{8 \eta \sigma_{\max}^2(\bm{C})}{(\eta-1) \tilde{\sigma}_{\min}^2(\bm{M}_{-})} $, and $\nu = 4 N \gamma + \frac{32 N \rho  \kappa}{\tilde{\sigma}_{\min}^2(\bm{M}_{-})} \left( \sigma_{\max}^2(\bm{C}) + \sigma_{\max}^2\left(\bm{C}^T-\frac{1}{2} \bm{M}_{-} \bm{M}_{-}^T \right) \right)$.\\
To ensure that there is a decrease in the optimality gap, we need to determine, for which values of $\rho$, we have $c - b_2 \rho - a \rho^2 > 0$ and $
\mu - \frac{c \kappa}{\rho} - (b_2 + a \kappa) \rho \geq (1+\kappa) (b_1 + a \kappa) \rho > 0$. In other words, we need to look for $\rho$ such that
\small
\begin{align}\label{eq_rho}
- \left[ (b_2 + a \kappa) + (1+\kappa) (b_1 + a \kappa) \right] \rho^2 + \mu \rho - c \kappa \geq 0.
\end{align}
\normalsize
The discriminant of the quadratic equation is $
\Delta = \mu^2 - 4 c \kappa \left[ (b_2 + a \kappa) + (1+\kappa) (b_1 + a \kappa) \right]$. To ensure that we can find $\rho$ such that \eqref{eq_rho} is satisfied, we need to impose that $\Delta > 0$. Since $\Delta$ is a third order equation in $\kappa$, finding, for which values of $\kappa > 0$, $\Delta > 0$ is not straightforward. However, since when $\kappa \rightarrow 0$, $\Delta \rightarrow \mu^2 > 0$, and knowing that $\Delta$ is a decreasing function with $\Delta \rightarrow - \infty$ as $\kappa \rightarrow \infty$, then we deduce that there exits $\bar{\kappa} > 0$ such that for $0 < \kappa < \bar{\kappa}$, we have $\Delta > 0$. In the remainder, we consider $\kappa$ such that $0 < \kappa < \bar{\kappa}$. Thus, for $0 < \rho < \bar{\rho}$, \eqref{eq_rho} holds where $
\bar{\rho} = \frac{\mu + \sqrt{\Delta}}{(b_2 + a \kappa) + (1+\kappa) (b_1 + a \kappa)}$. Therefore, we can write
\small
\begin{align}
\nonumber& \frac{1}{\rho} \mathbb{E}\left[\|\bm{\beta}^{k}-\bm{\beta}^{\star}\|_F^2\right] - (1+\kappa) \frac{1}{\rho} \mathbb{E}\left[\|\bm{\beta}^{k+1}-\bm{\beta}^{\star}\|_F^2\right] \\
\nonumber & + \rho \left(b_1  + a \kappa \right)  \mathbb{E}\left[\|\bm{\theta}^{k}-\bm{\theta}^\star\|_F^2\right]\\
& - \rho  (1+\kappa) \left(b_1  + a \kappa \right) \mathbb{E}\left[\|\bm{\theta}^{k+1}-\bm{\theta}^\star\|_F^2\right] + \nu \psi^{2k} \geq 0.
\end{align}
\normalsize
Re-arranging the terms, we get
\small
\begin{align}
\nonumber& \frac{1}{\rho} \mathbb{E}\left[\|\bm{\beta}^{k+1}-\bm{\beta}^{\star}\|_F^2\right] +   \rho \left(b_1  + a \kappa \right) \mathbb{E}\left[\|\bm{\theta}^{k+1}-\bm{\theta}^\star\|_F^2\right] \\
\nonumber & \leq \frac{1}{1+\kappa} \left(\frac{1}{\rho} \mathbb{E}\left[\|\bm{\beta}^{k}-\bm{\beta}^{\star}\|_F^2\right] +   \rho  \left(b_1  + a \kappa \right)\mathbb{E}\left[\|\bm{\theta}^{k}-\bm{\theta}^\star\|_F^2\right] \right)\\
&+ \frac{\nu }{1+\kappa} \psi^{2k}.
\end{align}
\normalsize
Using this equation iteratively, we can write
\small
\begin{align}
\nonumber & \frac{1}{\rho} \mathbb{E}\left[\|\bm{\beta}^{k+1}-\bm{\beta}^{\star}\|_F^2\right] +  \rho \left(b_1  + a \kappa \right)  \mathbb{E}\left[\|\bm{\theta}^{k+1}-\bm{\theta}^\star\|_F^2\right] \\
\nonumber &\!\leq\!\left(\frac{1}{1+\kappa}\right)^{k+1}\!\left(\frac{1}{\rho} \mathbb{E}\left[\|\bm{\beta}^{0}\!-\!\bm{\beta}^{\star}\|_F^2\right]\!+\!\rho (b_1\!+\!a \kappa ) \mathbb{E}\left[\|\bm{\theta}^{0}\!-\!\bm{\theta}^\star\|_F^2\right] \right) \\
& + \nu \sum_{j=0}^{k} \left(\frac{1}{1+\kappa}\right)^{k-j+1} \psi^{2j}.
\end{align}
\normalsize
Defining $\delta_1 = \min\{(1+\kappa)^{-1}, \psi^2 \}$ and $\delta_2 = \max\{(1+\kappa)^{-1}, \psi^2\}$, we can further write
\small
\begin{align}
\nonumber& \frac{1}{\rho} \mathbb{E}\left[\|\bm{\beta}^{k+1}-\bm{\beta}^{\star}\|_F^2\right] +  \rho  \left(b_1  + a \kappa \right) \mathbb{E}\left[\|\bm{\theta}^{k+1}-\bm{\theta}^\star\|_F^2\right] \\
\nonumber & \overset{\mathrm{(a)}}{\leq} \left(\frac{1+\delta_2}{2}\right)^{k+1}\!\left(\frac{1}{\rho} \mathbb{E}\left[\|\bm{\beta}^{0}\!-\!\bm{\beta}^{\star}\|_F^2\right]\!+\!\rho \left(b_1\!+\!a \kappa \right)\!\mathbb{E}\left[\|\bm{\theta}^{0}\!-\!\bm{\theta}^\star\|_F^2\right] \right)\\
\nonumber & + \nu \sum_{j=0}^{k} \left(\frac{1+\delta_2}{2}\right)^{k-j+1} \delta_1^{j} \\
\nonumber & \overset{\mathrm{(b)}}{\leq} \left(\frac{1+\delta_2}{2}\right)^{k+1}\!\left(\frac{1}{\rho}\mathbb{E}\left[\|\bm{\beta}^{0}\!-\!\bm{\beta}^{\star}\|_F^2 \right]\!+\!\rho  \left(b_1\!+\!a \kappa \right)\!\mathbb{E}\left[\|\bm{\theta}^{0}\!-\!\bm{\theta}^\star\|_F^2 \right] \right. \\
&\left. +  \frac{\nu(1+\delta_2)}{1+\delta_2-2 \delta_1} \right),
\end{align}
\normalsize
where we have used in $\mathrm{(a)}$ the fact that $\delta_2 \leq (1+\delta_2)/2$ since $\kappa > 0$ and $\psi \in (0,1)$ and $(2 \delta_1)/(1+\delta_2) \in (0,1)$ in $\mathrm{(b)}$. Since $(1+\delta_2)/2 \in (0,1)$, then we deduce that the sequence $(\bm{\theta}^{k}, \bm{\beta}^{k})$ converges to $(\bm{\theta}^{\star}, \bm{\beta}^{\star})$ linearly with a rate $(1+\delta_2)/2$. 


\bibliographystyle{IEEEtran}

\end{document}